\documentclass{article}

\usepackage{microtype}
\usepackage{graphicx}
\usepackage{subcaption}
\usepackage{booktabs} 
\usepackage{multirow} 
\usepackage{hyperref}

\usepackage[accepted]{icml2026}

\usepackage{amsmath}
\usepackage{amssymb}
\usepackage{mathtools}
\usepackage{amsthm}

\usepackage[capitalize,noabbrev]{cleveref}

\theoremstyle{plain}
\newtheorem{theorem}{Theorem}[section]

\newtheorem{lemma}[theorem]{Lemma}
\newtheorem{corollary}[theorem]{Corollary}
\theoremstyle{definition}

\newtheorem{assumption}[theorem]{Assumption}
\theoremstyle{remark}

\usepackage[textsize=tiny]{todonotes}

\icmltitlerunning{GP2F: Cross-Domain Graph Prompting with Adaptive Fusion of Pre-trained Graph Neural Networks}

\begin{document}
\twocolumn[
  \icmltitle{
    GP2F: Cross-Domain Graph Prompting with Adaptive Fusion of Pre-trained Graph Neural Networks}



  \icmlsetsymbol{equal}{*}

  \begin{icmlauthorlist}
    \icmlauthor{Dongxiao He}{tju}
    \icmlauthor{Wenxuan Sun}{tju}
    \icmlauthor{Yongqi Huang}{tju}
    \icmlauthor{Jitao Zhao}{tju}
    \icmlauthor{Di Jin}{tju}

  \end{icmlauthorlist}

  \icmlaffiliation{tju}{School of Computer Science and Technology, Tianjin University, Tianjin, China}

  \icmlcorrespondingauthor{Yongqi Huang}{yqhuang@tju.edu.cn}
  \icmlcorrespondingauthor{Jitao Zhao}{zjtao@tju.edu.cn}

  \icmlkeywords{Graph Representation Learning, Graph Pre-training, Graph Prompt Learning}

  \vskip 0.3in
]



\printAffiliationsAndNotice{}  

\begin{abstract}
  Graph Prompt Learning (GPL) has recently emerged as a promising paradigm for downstream adaptation of pre-trained graph models, mitigating the misalignment between pre-training objectives and downstream tasks. Recently, the focus of GPL has shifted from in-domain to cross-domain scenarios, which is closer to the real world applications, where the pre-training source and downstream target often differ substantially in data distribution. However, why GPLs remain effective under such domain shifts is still unexplored. Empirically, we observe that representative GPL methods are competitive with two simple baselines in cross-domain settings: full fine-tuning (FT) and linear probing (LP), motivating us to explore a deeper understanding of the prompting mechanism. We provide a theoretical analysis demonstrating that jointly leveraging these two complementary branches yields a smaller estimation error than using either branch alone, formally proving that cross-domain GPL benefits from the integration between pre-trained knowledge and task-specific adaptation. Based on this insight, we propose GP2F, a dual-branch GPL method that explicitly instantiates the two extremes: (1) a frozen branch that retains pre-trained knowledge, and (2) an adapted branch with lightweight adapters for task-specific adaptation. We then perform adaptive fusion under topology constraints via a contrastive loss and a topology-consistent loss. Extensive experiments on cross-domain few-shot node and graph classification demonstrate that our method outperforms existing methods. Code is available at https://github.com/hedongxiao-tju/GP2F.
\end{abstract}

\section{Introduction}
Graph Prompt Learning (GPL)~\cite{ProG, UniPrompt} has emerged as an effective and promising paradigm for bridging graph pre-training and downstream tasks, reducing label reliance and alleviating the misalignment between the two stages in both optimization objectives and data distributions. Most GPLs freeze the pre-trained Graph Neural Networks (GNNs) and only train lightweight prompts during training, ensuring both efficiency and stability~\cite{GraphControl}. Moreover, GPL exhibits strong universality, covering diverse graphs such as heterogeneous graphs~\cite{HGprompt} and text-attributed graphs~\cite{OFA}, as well as complex downstream scenarios such as multi-task~\cite{GraphPrompt, MultiGPrompt} and cross-domain~\cite{GCOPE, MDGFM} settings. Consequently, GPL effectively empowers various downstream applications, such as recommendation systems~\cite{GPT4Rec} and biomedical analysis~\cite{DDIPrompt}. 

\begin{figure}[t] 
    \centering
    \includegraphics[width=\columnwidth]{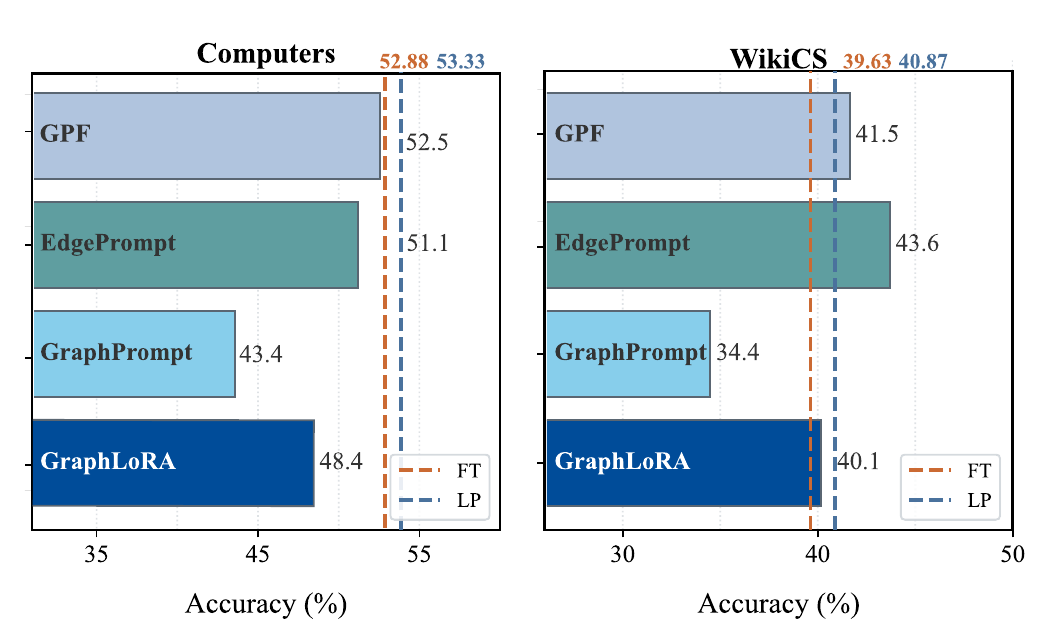}
    \caption{In cross-domain 1-shot scenario, where the GNN is pre-trained on \textit{Cora} using \texttt{GRACE}, competitive performance is observed between representative GPL methods and two strong baselines, full Fine-Tuning (FT) and Linear Probing (LP). }
    \label{fig:motivation1}
\end{figure}
Recently, the research of GPL has been extended to cross-domain settings~\cite{GCOPE, GraphLoRA}, which is more prevalent and challenging in real-world applications, as the source and target domains differ in data distributions.
For example, a model pre-trained on citation graphs may later be applied to co-purchase graphs.   
Such discrepancies introduce a significant gap between upstream pre-training and downstream tasks, always leading to negative transfer across domains. To mitigate this gap, several methods have been proposed. 
GraphLoRA aligns the feature distributions between the source and target domains through minimizing a weighted Maximum Mean Discrepancy (MMD)~\cite{MMD} loss. 
MDGPT~\cite{MDGPT} aligns multi-domain features via domain tokens during pre-training and adapts target domain features using mixed domain tokens. 
MDGFM~\cite{MDGFM} employs structure-aware tokens to reconstruct domain-invariant topology during pre-training, and aligns the target domain in both feature and structural spaces for transfer. 
BRIDGE~\cite{BRIDGE} trains feature aligners as experts, and activates and assembles relevant expert knowledge with a soft routing network for cross-domain transfer.

However, despite these initial successes, the effectiveness of GPL in cross-domain settings remains unclear. When a significant distribution shift exists between source and target domains, this effectiveness may stem from multiple sources: (i) general knowledge encoded in the pre-trained GNN, (ii) task-specific knowledge acquired through fine-tuning on the target domain, or (iii) implicit mechanisms that align and fuse knowledge across domains. Without a clear understanding of these contributing factors, the design of effective GPL methods often relies on empirical experience rather than principled guidance, motivating the need for a deeper theoretical investigation.

To investigate the mechanisms that make GPL effective in cross-domain scenarios, we compared representative GPLs with two simple baselines in cross-domain settings: Linear Probing (LP), which trains a linear classifier with a frozen pre-trained encoder, and full Fine-Tuning (FT), which updates all model parameters. As illustrated in Fig.~\ref{fig:motivation1}, we observe that the pre-trained encoder achieves strong performance under both LP and FT, often matching or even surpassing existing GPL methods. This indicates that LP and FT can yield discriminative representations. To better understand the mechanism behind the experiments, in Section~\ref{sec:theoretical-analysis}, we provide a theoretical analysis showing that, compared with using either branch alone, an appropriate combination of the two branches yields a smaller evaluation error. 
This provides a principled explanation for the effectiveness of GPL under domain shift, proving that fusing a frozen branch with an adapted branch can outperform either branch alone by integrating transferable pre-trained knowledge with task-specific signals.

Based on this finding, we propose GP2F, a dual-branch graph prompt learning method for cross-domain adaptation. GP2F decouples pre-trained knowledge and downstream task adaptation into two independent branches: (1) a frozen branch that preserves pre-trained knowledge using a frozen GNN, and (2) an adapted branch that introduces lightweight adapters to support task-specific adaptation under domain shift. 
Additionally, GP2F employs a contrastive loss to align the semantics of the two branches and further incorporates a topology-consistent loss to constrain the fusion of these two branches. Our contributions are summarized as follows: 
\begin{itemize}
\item We find that LP and FT serve as strong baselines in cross-domain settings and provide a theoretical analysis to demonstrate that jointly optimizing the frozen branch and an adapted branch yields a smaller estimation error than using either branch alone.
\item We propose GP2F, a dual-branch cross-domain graph prompt learning method that combines a frozen pre-trained GNN branch and an adapter-based adapted branch, along with a contrastive alignment loss and a topology-consistent fusion loss to jointly regularize branch-alignment and fusion.
\item Extensive experiments on node and graph classification in cross-domain few-shot settings demonstrate GP2F consistently outperforms existing methods.
\end{itemize}

\section{Preliminary}
\textbf{Graphs.} We consider an attributed graph $G = (\mathcal{V}, \mathcal{E})$, where $\mathcal{V} = \{v_1, \dots, v_N\}$ is the set of $N$ nodes and $\mathcal{E}$ is the set of edges. The node attributes are represented by a feature matrix $\mathbf{X} \in \mathbb{R}^{N \times d}$, where $\mathbf{x}_i \in \mathbb{R}^d$ denotes the feature of $v_i$. The topological structure is captured by an adjacency matrix $\mathbf{A} \in \{0, 1\}^{N \times N}$, such that $\mathbf{A}_{ij} = 1$ if $(v_i, v_j) \in \mathcal{E}$, and $\mathbf{A}_{ij} = 0$ otherwise.

\textbf{Graph Pre-training and Prompting.} We consider the graph encoder \( g_\theta \), and the optimization objective during pre-training is:
\begin{equation}
\theta^*=\arg\min_{\theta}\frac{1}{|\mathcal{D}_\mathrm{S}|}\sum_{G_i\in\mathcal{D}_\mathrm{S}}
\mathcal{L}_{\mathrm{pre}}\!\big(g_\theta(G_i)\big).
\end{equation}
where \( \mathcal{L}_{\mathrm{pre}} \) is a self-supervised objective, typically a contrastive or generative loss. $\mathcal{D}_\mathrm{S} = \{G_1, G_2, \dots, G_{|\mathcal{D}_\mathrm{S}|}\}$ is the source domain dataset. $G_i \in \mathcal{D}_\mathrm{S}$ denotes the $i$-th input graph, and \( \theta \) represents the parameters of the encoder. The encoder \( g_\theta \) learns generalizable knowledge by optimizing this objective. 
In the prompting stage, the pre-trained encoder \( g_{\theta^*} \) remains frozen, and a learnable prompt module \( f_\phi \) is introduced for adapting to downstream task. The optimization objective is:
\begin{equation}
\phi^* = \arg\min_{\phi} \frac{1}{|\mathcal{D}_{\mathrm{T}}|} 
\sum_{(G_i, y)\in \mathcal{D}_{\mathrm{T}}}
\mathcal{L}_{\mathrm{down}}\!\left( f_{\phi}\!\left(g_{\theta^*}(G_i)\right),\, y \right).
\end{equation}
where \( \mathcal{L}_{\text{down}}(\cdot) \) is the task-specific loss function, $\mathcal{D}_\mathrm{T} = \{G_1, G_2, \dots, G_{|\mathcal{D}_\mathrm{T}|}\}$ is the target domain dataset, $G_i \in \mathcal{D}_T$ is the input graph, \( y \) is the corresponding label, and \( \phi \) represents the parameters of the prompt module. In this stage, the prompt module \( f_\phi \) is optimized to adapt the pre-trained representations for the specific downstream task.
\section{Theoretical analysis}
\label{sec:theoretical-analysis}
We consider an anchor node $v_i$ on the target graph $\mathcal{D}_{\mathrm{T}}=\{G_{\mathrm{T}}\}=(\mathbf{X}_{\mathrm{T}}, \mathbf{A}_{\mathrm{T}}, y_{\mathrm{T}})$. Let $\mathbf{Z}=[\mathbf{z}_1,\ldots,\mathbf{z}_N]\in\mathbb{R}^{d\times N}$ denote the ideal (latent) representation matrix of nodes on $G_{\mathrm{T}}$, where $\mathbf{z}_i\in\mathbb{R}^d$ is the ideal representation of $v_i$ with label $y_i$. We denote by $\mathbf{h}_i^g,\mathbf{h}_i^a\in\mathbb{R}^d$ the representations produced by the frozen encoder $g_{\theta^*}$ and the adapted encoder $g_{\theta_{\mathrm{T}}}$ (via lightweight adapters or prompts), respectively, and view them as two noisy observations of the same $\mathbf{z}_i$ with different error statistics.

\begin{assumption}[Latent linear separability]
\label{ass:latent-linear}
There exist latent representations $\mathbf{Z}=[\mathbf{z}_1,\ldots,\mathbf{z}_N]\in\mathbb{R}^{d\times N}$ and a linear classifier
$\mathbf{W}^\star=[(\mathbf{w}_1^\star)^\top;\,\cdots;\,(\mathbf{w}_C^\star)^\top]\in\mathbb{R}^{C\times d}$
satisfying Assumption~\ref{ass:bounded-norm}.
There exists a margin $\gamma>0$ such that for node $v_i$ with label $y_i$:
\begin{equation}
(\mathbf{w}_{y_i}^\star)^\top \mathbf{z}_i \ge (\mathbf{w}_c^\star)^\top \mathbf{z}_i + \gamma,
\; \forall\, c\in\{1,\ldots,C\},\; c\neq y_i .
\end{equation}
\end{assumption}
This assumes that classes are separable in the latent representation, so a linear classifier can work. In practice, domain shift mainly adds noise to the representations.

\begin{assumption}[Second-order error model for $g_{\theta^*}$ and $g_{\theta_{\text{T}}}$]
\label{ass:error-model}
We assume that all expectations are taken over a randomly sampled target node $v_i$ and the randomness in training stage (e.g., sampling and optimization).
Then the node representations of the frozen encoder $g_{\theta^*}$ and the adapted encoder $g_{\theta_{\text{T}}}$ for node $v_i$ can be written as:
\begin{equation}
    \mathbf{h}_i^{g} = \mathbf{z}_i + \boldsymbol{\epsilon}_i^{g},\quad
    \mathbf{h}_i^{a} = \mathbf{z}_i + \boldsymbol{\epsilon}_i^{a},
\end{equation}
where the error terms satisfy $\mathbb{E}[\boldsymbol{\epsilon}_i^{g}] = \mathbb{E}[\boldsymbol{\epsilon}_i^{a}] = \mathbf{0}$ and have bounded second moments: $\mathbb{E}\!\left[\|\boldsymbol{\epsilon}_i^{g}\|^2\right] < M,
\mathbb{E}\!\left[\|\boldsymbol{\epsilon}_i^{a}\|^2\right] < M$, where $M$ denotes a finite constant. Define:
\begin{equation}
    \sigma_g^2 := \mathbb{E}\|\boldsymbol{\epsilon}_i^{g}\|^2,\quad
    \sigma_a^2 := \mathbb{E}\|\boldsymbol{\epsilon}_i^{a}\|^2,\quad
    \rho := \mathbb{E}\langle \boldsymbol{\epsilon}_i^{g}, \boldsymbol{\epsilon}_i^{a}\rangle.
\end{equation}
We assume: $\sigma_g^2>0$, $\sigma_a^2>0$ and $\sigma_g^2+\sigma_a^2-2\rho>0$, $\rho<\min\{\sigma_g^2,\sigma_a^2\}$.
\end{assumption}
Assumption~\ref{ass:error-model} requires that both encoders have non-zero error variance and that their errors are not perfectly aligned, i.e., the two branches exhibit partially complementary error patterns rather than behaving identically. 
We model $\mathbf{h}_i^g$ and $\mathbf{h}_i^a$ as two unbiased noisy observations of the same latent representation $\mathbf{z}_i$, i.e., $\mathbb{E}[\boldsymbol{\epsilon}_i^{g}]=\mathbb{E}[\boldsymbol{\epsilon}_i^{a}]=\mathbf{0}$.
This zero-mean condition is a standard second-order moment assumption that allows us to focus on how the two branches differ in error variances and cross-covariance. In practice, the two branches share the same backbone, data split, and protocol, so their deviations from $\mathbf{z}_i$ are unlikely to have a systematic directional bias.
Consider their discrepancy as follows:
\begin{equation}
    \Delta_i := \mathbf{h}_i^g-\mathbf{h}_i^a = \boldsymbol{\epsilon}_i^g-\boldsymbol{\epsilon}_i^a.
\end{equation}
Since $\mathbb{E}[\boldsymbol{\epsilon}_i^g]=\mathbb{E}[\boldsymbol{\epsilon}_i^a]=\mathbf{0}$, we have $\mathbb{E}[\Delta_i]=\mathbf{0}$.
This makes $\Delta_i$ a natural \emph{control variate}: adding a scaled version of a zero-mean term does not change the estimator's expectation but can reduce its mean-squared error when it is correlated with the estimation error.
Motivated by this, we correct $\mathbf{h}_i^a$ using $\Delta_i$ and define:
\begin{equation}
\tilde{\mathbf{z}}_i(\lambda)
:= \mathbf{h}_i^a + \lambda(\mathbf{h}_i^g-\mathbf{h}_i^a),
\quad \lambda\in\mathbb{R}.
\label{eq:corrected-est}
\end{equation}
The estimator remains unbiased for any $\lambda$.
Moreover, \eqref{eq:corrected-est} is equivalent to an affine fusion:
\begin{equation}
\tilde{\mathbf{z}}_i(\lambda)=\lambda \mathbf{h}_i^g + (1-\lambda)\mathbf{h}_i^a,
\label{eq:fusion-equivalence}
\end{equation}
which in fact characterizes the general form of unbiased affine combinations of two unbiased estimates. By Assumption~\ref{ass:error-model}, we have $\mathbf{h}_i^g=\mathbf{z}_i+\boldsymbol{\epsilon}_i^g$ and $\mathbf{h}_i^a=\mathbf{z}_i+\boldsymbol{\epsilon}_i^a$. For $\tilde{\mathbf{z}}_i(\lambda)$ defined in Eq.\eqref{eq:corrected-est}, its estimation error is:
\begin{equation}
\tilde{\mathbf{z}}_i(\lambda)-\mathbf{z}_i
= \lambda \boldsymbol{\epsilon}_i^g + (1-\lambda)\boldsymbol{\epsilon}_i^a.
\label{eq:error-form}
\end{equation}
We measure the estimation error distance with latent representation $\mathbf{z}_{i}$ by using the mean-squared error (MSE):
\begin{equation}
\mathrm{MSE}(\lambda)
:= \mathbb{E}\big[\|\tilde{\mathbf{z}}_i(\lambda)-\mathbf{z}_i\|^2\big].
\label{eq:mse-def}
\end{equation}

\begin{lemma}[Quadratic form of the MSE]
\label{lem:mse-quadratic}
Under Assumptions~\ref{ass:error-model}, define
$\sigma_g^2 := \mathbb{E}\big[\|\boldsymbol{\epsilon}_i^g\|^2\big]$,
$\sigma_a^2 := \mathbb{E}\big[\|\boldsymbol{\epsilon}_i^a\|^2\big]$,
and $\rho := \mathbb{E}\big[\langle \boldsymbol{\epsilon}_i^g, \boldsymbol{\epsilon}_i^a\rangle\big]$.
Then $\mathrm{MSE}(\lambda)$ is a strictly convex quadratic function of $\lambda$ and has a unique minimizer:
\begin{equation}
    \mathrm{MSE}(\lambda) = A\lambda^2+B\lambda+C, \quad
    \lambda^\star = \frac{\sigma_a^2-\rho}{\sigma_g^2+\sigma_a^2-2\rho},
\end{equation}
where $A=\sigma_g^2+\sigma_a^2-2\rho >0$, $B=2(\rho-\sigma_a^2)$ and $C=\sigma_a^2$. The $\lambda^\star$ satisfies $0<\lambda^\star<1$ under Assumption~\ref{ass:error-model}.
\end{lemma}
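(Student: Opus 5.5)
The plan is to expand the squared norm in Eq.~\eqref{eq:error-form} and use linearity of expectation. Under Assumption~\ref{ass:error-model}, all second moments are finite: $\sigma_g^2,\sigma_a^2<M$, and $|\rho|\le \sqrt{\sigma_g^2\sigma_a^2}$ by Cauchy--Schwarz. So every expectation below is well defined and the expansion is legitimate.

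First I expand, for fixed $\lambda$,
\begin{equation*}
\|\lambda\boldsymbol{\epsilon}_i^g+(1-\lambda)\boldsymbol{\epsilon}_i^a\|^2
=\lambda^2\|\boldsymbol{\epsilon}_i^g\|^2+(1-\lambda)^2\|\boldsymbol{\epsilon}_i^a\|^2+2\lambda(1-\lambda)\langle\boldsymbol{\epsilon}_i^g,\boldsymbol{\epsilon}_i^a\rangle .
\end{equation*}
Taking expectations gives $\mathrm{MSE}(\lambda)=\lambda^2\sigma_g^2+(1-\lambda)^2\sigma_a^2+2\lambda(1-\lambda)\rho$. Collecting powers of $\lambda$ yields the leading coefficient $\sigma_g^2+\sigma_a^2-2\rho=A$, the linear coefficient $-2\sigma_a^2+2\rho=B$, and the constant $\sigma_a^2=C$, as claimed.

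Strict convexity follows because $\mathrm{MSE}''(\lambda)=2A$. By the assumption $\sigma_g^2+\sigma_a^2-2\rho>0$ in Assumption~\ref{ass:error-model}, we have $A>0$. A strictly convex quadratic has a unique minimizer, found from the first-order condition $2A\lambda+B=0$, which gives $\lambda^\star=-B/(2A)=(\sigma_a^2-\rho)/(\sigma_g^2+\sigma_a^2-2\rho)$.

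For $0<\lambda^\star<1$, the denominator is positive. The numerator $\sigma_a^2-\rho$ is positive since $\rho<\min\{\sigma_g^2,\sigma_a^2\}\le\sigma_a^2$, so $\lambda^\star>0$. Next, $1-\lambda^\star=(\sigma_g^2-\rho)/A$, and this is positive because $\rho<\sigma_g^2$, so $\lambda^\star<1$.

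The only mildly delicate point is justifying that all expectations exist, which the bounded-second-moment condition and Cauchy--Schwarz handle. Beyond that the argument is routine algebra, and I expect no real obstacle.
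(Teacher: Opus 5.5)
Your proposal is correct and follows essentially the same route as the paper's proof: expand the squared norm, take expectations to read off $A$, $B$, $C$, use $A>0$ for strict convexity and $\lambda^\star=-B/(2A)$, then use $\rho<\sigma_a^2$ and $\rho<\sigma_g^2$ for $0<\lambda^\star<1$. Your added check that all expectations are finite via Cauchy--Schwarz, and your computation $1-\lambda^\star=(\sigma_g^2-\rho)/A$, make explicit small steps the paper leaves implicit.
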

Detailed proof about Lemma~\ref{lem:mse-quadratic} and the derivation of $\mathrm{MSE}(\lambda^\star)$ are in Appendix~\ref{sec:proof_for_mse_quadratic}. 

\begin{theorem}[Strict MSE improvement over either branch]
\label{thm:mse-improvement}
Under Assumption~\ref{ass:error-model}, the fused estimator at $\lambda^\star$ strictly improves over either single branch:
\begin{equation}
    \mathbb{E}\big[\|\tilde{\mathbf{z}}_i(\lambda^\star)-\mathbf{z}_i\|^2\big]
    <
    \min\Big\{
    \mathbb{E}\big[\|\mathbf{h}_i^g-\mathbf{z}_i\|^2\big],\,
    \mathbb{E}\big[\|\mathbf{h}_i^a-\mathbf{z}_i\|^2\big]
    \Big\}.
\end{equation}
\end{theorem}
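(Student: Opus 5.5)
The plan is to read the two single-branch errors off the quadratic of Lemma~\ref{lem:mse-quadratic} and then show that its minimum value lies strictly below both. By Eq.~\eqref{eq:fusion-equivalence}, $\tilde{\mathbf{z}}_i(1)=\mathbf{h}_i^g$ and $\tilde{\mathbf{z}}_i(0)=\mathbf{h}_i^a$. Hence
\[
\mathbb{E}\|\mathbf{h}_i^g-\mathbf{z}_i\|^2=\mathrm{MSE}(1)=\sigma_g^2,
\qquad
\mathbb{E}\|\mathbf{h}_i^a-\mathbf{z}_i\|^2=\mathrm{MSE}(0)=\sigma_a^2 .
\]
These values agree with $A+B+C=\sigma_g^2$ and $C=\sigma_a^2$. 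The right-hand side of the theorem is therefore $\min\{\mathrm{MSE}(0),\mathrm{MSE}(1)\}$.

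Next I use Lemma~\ref{lem:mse-quadratic}: $\mathrm{MSE}$ is strictly convex with $A>0$, and its unique minimizer satisfies $\lambda^\star\in(0,1)$. Strict convexity together with uniqueness of the minimizer gives $\mathrm{MSE}(\lambda^\star)<\mathrm{MSE}(\lambda)$ for every $\lambda\neq\lambda^\star$. Since $\lambda^\star\neq 0$ and $\lambda^\star\neq 1$, both strict inequalities $\mathrm{MSE}(\lambda^\star)<\sigma_a^2$ and $\mathrm{MSE}(\lambda^\star)<\sigma_g^2$ follow, and taking the minimum of the two gives the theorem.

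As a cross-check I will also verify the bound explicitly. Completing the square gives
\[
\mathrm{MSE}(\lambda^\star)=C-\frac{B^2}{4A}=\sigma_a^2-\frac{(\sigma_a^2-\rho)^2}{\sigma_g^2+\sigma_a^2-2\rho}=\frac{\sigma_g^2\sigma_a^2-\rho^2}{\sigma_g^2+\sigma_a^2-2\rho}.
\]
The gap to $\sigma_a^2$ is $(\sigma_a^2-\rho)^2/A$. Symmetrically, the gap to $\sigma_g^2$ is $(\sigma_g^2-\rho)^2/A$. Both gaps are strictly positive because $\rho<\min\{\sigma_g^2,\sigma_a^2\}$ and $A>0$.

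The only delicate point is strictness, which requires $\lambda^\star\notin\{0,1\}$; equivalently, $\rho\neq\sigma_a^2$ and $\rho\neq\sigma_g^2$. This is exactly what the assumption $\rho<\min\{\sigma_g^2,\sigma_a^2\}$ in Assumption~\ref{ass:error-model} guarantees, and it is already packaged in Lemma~\ref{lem:mse-quadratic}. Beyond this the argument is routine.
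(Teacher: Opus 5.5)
Your proposal is correct and follows the paper's proof: you identify $\mathrm{MSE}(0)=\sigma_a^2$ and $\mathrm{MSE}(1)=\sigma_g^2$, then invoke the strict convexity and the unique minimizer $\lambda^\star\in(0,1)$ from Lemma~\ref{lem:mse-quadratic}. Your explicit cross-check, with gaps $(\sigma_a^2-\rho)^2/A$ and $(\sigma_g^2-\rho)^2/A$, also matches the closed form the paper derives in its proof of that lemma.
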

Detailed proof about theorem~\ref{thm:mse-improvement} are in Appendix~\ref{sec:proof_for_mse_improvement}. This theorem implies that if we can obtain two approximately unbiased yet non-identical estimators of the same latent signal, and learn a fusion weight close to $\lambda^{\star}$, then the fused representation $\tilde{\mathbf{Z}}$ achieves smaller estimation error than either branch alone. Moreover, under the margin condition in Assumption~\ref{ass:latent-linear}, the misclassification probability can be upper-bounded by a constant multiple of $\mathbb{E}\!\left[\|\tilde{\mathbf z}_i(\lambda)-\mathbf z_i\|_2^2\right]$ (see Corollary~\ref{cor:risk-bound}). Therefore, reducing the MSE tightens the classification error bound, providing a principled explanation for why fusing a frozen branch with an adapted branch can improve performance in cross-domain settings and motivating our method design.
\begin{figure*}[htbp]
    \centering
    \includegraphics[width=\textwidth]{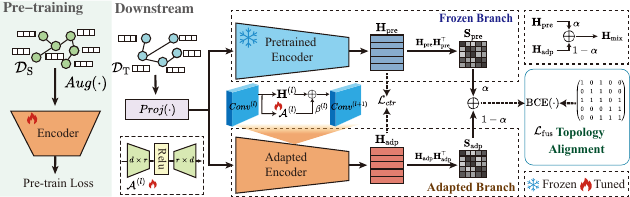}
    \caption{Overall framework of the proposed GP2F. Given a graph in target domain $\mathcal{D}_\mathrm{T}$ and a encoder pre-trained on source domain $\mathcal{D}_\mathrm{S}$, ${Proj}(\cdot)$ is a MLP used for dimension alignment. The frozen branch consists of a pre-trained GNN to preserve universal knowledge, while the adapted branch uses learnable adapters $\mathcal{A}$ for downstream adaptation. Additionally, a contrastive loss $\mathcal{L}_{\mathrm{ctr}}$ is used to align the two branches, and a BCE loss constrains the fusion. $\mathbf{H}_{\mathrm{mix}}$ is used for downstream prediction.}
    \label{fig:framework}
\end{figure*}
\section{Method}
Guided by the theories in Section \ref{sec:theoretical-analysis}, we propose GP2F, a dual-branch cross-domain graph prompt learning method. For dimension alignment, we use a MLP $Proj(\cdot)$ as projector. GP2F consists of a frozen pre-trained encoder and an adapted branch equipped with lightweight adapters $\mathcal{A}$. We introduce a contrastive loss $\mathcal{L}_\mathrm{ctr}$ to align the representations produced by the two branches, and a topology-consistent fusion loss $\mathcal{L}_\mathrm{fus}$ to constrain the fusion.

\subsection{Dual-Branch Encoding}
\textbf{Connection to theory.} The frozen branch output corresponds to $\mathbf{h}_i^g$ and the adaption branch corresponds to $\mathbf{h}_i^a$ in Eq.~\eqref{eq:corrected-est}, $\alpha$ corresponds to $\lambda$ in Eq.~\eqref{eq:fusion-equivalence} and the goal is to obtain two approximately unbiased estimators with complementary errors.

Guided by the theoretical insight above, we design a parallel dual-branch architecture, 
Formally, given a target graph $G$ and the frozen pre-trained encoder $g_{\theta^*}$, the frozen branch is denoted as: 
\begin{equation}
    \mathbf{H}_{\mathrm{pre}} = g_{\theta^\ast}(G).
    \label{eq:encode_pre}
\end{equation}
In parallel, the adapted branch introduces a lightweight learnable adaptation module \( f_{\phi}(\cdot) \) for task-specific adaptation, which is obtained as: 
\begin{equation}
    \mathbf{H}_{\mathrm{adp}} = f_{\phi}\!\left(g_{\theta^\ast}(G)\right).
    \label{eq:encode_adp}
\end{equation}
Consistent with the theoretical analysis in Eq.~\eqref{eq:fusion-equivalence}, we perform an affine fusion of $\mathbf{H}_{\mathrm{pre}}$ and $\mathbf{H}_{\mathrm{adp}}$ to get $\mathbf{H}_\mathrm{mix}$ used in downstream objectives such as cross entropy as follows:
\begin{equation}
\mathbf{H}_{\mathrm{mix}} = \alpha \cdot \mathbf{H}_{\mathrm{pre}} + (1 - \alpha) \cdot \mathbf{H}_{\mathrm{adp}},
\label{eq:fusion}
\end{equation}
where $\alpha \in [0, 1]$ is a learnable scaling factor that adaptively weights each branch, favoring the frozen branch initially when the randomly-initialized adapted branch yields suboptimal representations, and gradually shifting to the adapted branch as training. 

\subsection{Residual Adapter with Structural Contrastive Loss}
In challenging cross-domain few-shot scenarios, prior research has highlighted that aggressive parameter updates can lead to representation drift~\cite{GraphControl}. This phenomenon occurs when the model excessively distorts the universal feature space acquired during pre-training to fit downstream tasks, thereby sacrificing the generalization priors accumulated from the source domain. 

To address this issue, we employ a progressive update strategy which performs parameter-efficient fine-tuning in place of full fine-tuning. We introduce a set of learnable adapters $\mathcal{A} = \{\mathcal{A}^{(1)}, \mathcal{A}^{(2)}, \dots, \mathcal{A}^{(L)}\}$, where $\mathcal{A}^{(l)}$ is integrated into the $l$-th layer of the encoder. Each adapter $\mathcal{A}^{(l)}$ consists of two linear layers, specifically a down-projection $\mathrm{DOWN}^{(l)} \in \mathbb{R}^{d \times r}$ and an up-projection $\mathrm{UP}^{(l)} \in \mathbb{R}^{r \times d}$ ($r \ll d$), with a non-linear activation in between. At layer $l$, $\mathbf{H}^{(l)} = g^{(l)}_\theta(\mathbf{H}_{\mathrm{adp}}^{(l-1)})$ and the adapted representation is obtained as:
\begin{equation}
\mathbf{H}_{\mathrm{adp}}^{(l)} = \mathbf{H}^{(l)} 
+ \beta^{(l)} \cdot \mathrm{UP}^{(l)}\!\Big(\sigma\!\big(\mathrm{DOWN}^{(l)}(\mathbf{H}^{(l)})\big)\Big),
\label{eq:encode_adp_layer}
\end{equation}
where $\mathbf{H}^{(0)} = Proj(\mathbf{X})$ and $\sigma(\cdot)$ denotes the ReLU activation function. $\beta^{(l)}$ is a learnable scaling factor initialized with a small value, which controls the adaptation intensity at each layer while ensuring the stability of the pre-trained knowledge during the initial phase of training. 
Furthermore, we introduce a contrastive loss $\mathcal{L}_{\mathrm{ctr}}$, treating the representations generated by the frozen branch and the adapted branch as two distinct contrastive views, denoted as $\mathbf{H}_\mathrm{pre}=\{\mathbf{h}_1^g, \dots, \mathbf{h}_N^g\}$ and $\mathbf{H}_\mathrm{adp}=\{\mathbf{h}_1^a, \dots, \mathbf{h}_N^a\}$, respectively. 
For node $v_i$, the positive neighbor set is:
\begin{equation}
\mathcal{P}_i = \{ \mathbf{h}_j^g \cup \mathbf{h}_j^a \mid j \in \mathcal{N}(v_i) \},
\label{eq:ctr_positive}
\end{equation}
where $\mathcal{N}(v_i)$ denotes the neighborhood of node $v_i$. Let $\mathcal{Q}_i$ be the set of negative samples. We define the similarity function as $\psi(a, b) = \exp(a^\top b / \tau_\mathrm{ctr})$, where $\tau_\mathrm{ctr}$ is a temperature parameter. The loss of node $v_i$ is given by: 
\begin{equation}
\begin{aligned}
&-\ell(\mathbf{h}_i^g, \mathbf{h}_i^a) \\
&=\log
\frac{
\psi(\mathbf{h}_i^g, \mathbf{h}_i^a)
+ \sum\limits_{\mathbf{h}_j \in \mathcal{P}_i}
\psi(\mathbf{h}_i^g, \mathbf{h}_j)
}{
\psi(\mathbf{h}_i^g, \mathbf{h}_i^a)
+ \sum\limits_{\mathbf{h}_j \in \mathcal{P}_i}
\psi(\mathbf{h}_i^g, \mathbf{h}_j)
+ \sum\limits_{\mathbf{h}_k \in \mathcal{Q}_i}
\psi(\mathbf{h}_i^g, \mathbf{h}_k)
}.
\end{aligned}
\label{eq:ctr_single}
\end{equation}
$\mathcal{Q}_i$ is the negative set, including node pairs except $\mathcal{P}_i$. Since the two contrastive views are symmetric, both the pair $(\mathbf{h}_i^g, \mathbf{h}_i^a)$ and $(\mathbf{h}_i^a, \mathbf{h}_i^g)$ are calculated, and the total loss $\mathcal{L}_{\mathrm{ctr}}$ is defined as:
\begin{equation}
\mathcal{L}_{\mathrm{ctr}} = \frac{1}{2N} \sum_{i=1}^{N} \left( \ell(\mathbf{h}_i^g, \mathbf{h}_i^a) + \ell(\mathbf{h}_i^a, \mathbf{h}_i^g) \right).
\label{eq:ctr_full}
\end{equation}
This loss aligns the target-domain structure with the source domain, and guide the adapted branch to generate discriminative representations.

\subsection{Topology-Consistent Fusion Loss}
While the dual-branch architecture decouples universal and task-specific knowledge, $\mathbf{H}_{\mathrm{pre}}$ and $\mathbf{H}_{\mathrm{adp}}$ often capture distinct semantics in representation space. To mitigate the discrepancy between these two views, we introduce a fusion constraint based on downstream topology. We first compute the self-similarity matrices of both representations:
\begin{equation}
\mathbf{S}_{\mathrm{pre}} = \mathrm{Norm}(\mathbf{H}_{\mathrm{pre}}\mathbf{H}_{\mathrm{pre}}^\top), \quad \mathbf{S}_{\mathrm{adp}} = \mathrm{Norm}(\mathbf{H}_{\mathrm{adp}}\mathbf{H}_{\mathrm{adp}}^\top),
\label{eq:compute_sim}
\end{equation}
where $\mathrm{Norm}(\cdot)$ denotes the normalization operation. The mixed similarity matrix $\mathbf{S}_{\mathrm{mix}}$ is then generated using the learnable scaling factor $\alpha$ defined in Eq.~\eqref{eq:fusion}: 
\begin{equation}
\mathbf{S}_{\mathrm{mix}} = \alpha \cdot \mathbf{S}_{\mathrm{pre}} + (1 - \alpha) \cdot \mathbf{S}_{\mathrm{adp}}.
\label{eq:fusion_sim}
\end{equation}
To maintain structural consistency, nodes that are topologically adjacent should exhibit high semantic similarity in the representation space, whereas disconnected nodes should remain distant. Accordingly, we align the mixed similarity matrix $\mathbf{S}_\mathrm{mix}$ with the target topology $\mathbf{A}$. We first introduce a similarity threshold $t$ to define a consistency mask $\mathcal{M}$:
\begin{equation}
\mathcal{M} = \{(i,j) \mid (\mathbf{S}_{ij} > t \wedge \mathbf{A}_{ij} = 1) \vee (\mathbf{S}_{ij} \le t \wedge \mathbf{A}_{ij} = 0)\},
\label{eq:fus_mask}
\end{equation}
where $\mathbf{S}_{ij}$ denotes the representation similarity between nodes $v_i$ and $v_j$ in $\mathbf{S}_\mathrm{mix}$. The topology-consistent fusion loss $\mathcal{L}_{\mathrm{fus}}$ is formulated via Binary Cross-Entropy (BCE) as:
\begin{equation}
\mathcal{L}_{\mathrm{fus}} = \frac{1}{|\mathcal{M}|} \sum_{(i,j) \in \mathcal{M}} \mathrm{BCE}\big(\sigma(\mathbf{S}_{ij} / \tau_\mathrm{fus}), \mathbf{A}_{ij}\big),
\label{eq:fus_compute}
\end{equation}
where $\sigma(\cdot)$ is the sigmoid function and $\tau_\mathrm{fus}$ is a temperature parameter. 
In practice, $\mathcal{L}_\mathrm{fus}$ is approximated by computing this loss on sampled subgraphs within each mini-batch, leading to $\mathcal{O}(B^2d)$ complexity with a small batch size $B$. 

\subsection{Optimization Objective}
The overall optimization objective is defined as:
\begin{equation}
\mathcal{L} = \mathcal{L}_{\mathrm{cls}} + \lambda_1 \mathcal{L}_{\mathrm{ctr}} + \lambda_2 \mathcal{L}_{\mathrm{fus}},
\label{eq:final_loss}
\end{equation}
where $\mathcal{L}_{\mathrm{cls}}$ denotes the cross-entropy classification loss. The hyperparameters $\lambda_1$ and $\lambda_2$ weight the contributions of these two losses, respectively.

\section{Experiments}
In this section, we conduct extensive experiments to evaluate the effectiveness of \texttt{GP2F} in cross-domain few-shot learning, aiming to answer the following research questions:

\textbf{RQ1}: How effective is \texttt{GP2F} in cross-domain few-shot learning scenarios compared with existing baselines? \\
\textbf{RQ2}: Does \texttt{GP2F} demonstrate robustness across different pre-training strategies? \\
\textbf{RQ3}: How does \texttt{GP2F} performs compare with graph foundation models? \\
\textbf{RQ4}: Is \texttt{GP2F} effective on large-scale datasets? \\
\textbf{RQ5}: How do key components and hyperparameters affect the performance of \texttt{GP2F}?

\begin{table*}[tp] 
  \caption{Accuracy of 1-shot node classification across datasets pre-trained on \textit{Cora}. Best in \textbf{bold} and second best \underline{underlined}.  }
  \label{tab:1-shot-node}
  \centering
   \resizebox{1.0\textwidth}{!}{%
    \begin{tabular}{llllllll}
    \toprule
     \textbf{\textrm{Method}}& \textbf{\textrm{Cora}} & \textbf{\textrm{CiteSeer}} & \textbf{\textrm{PubMed}} & \textbf{\textrm{Computers}} & \textbf{\textrm{Photo}} & \textbf{\textrm{CS}} & \textbf{\textrm{WikiCS}} \\
    \midrule
    \texttt{GRACE(FT)}          & $33.65_{\pm 10.10}$& $28.13_{\pm 7.55}$& $51.71_{\pm 8.95}$& $52.88_{\pm 12.25}$& $61.52_{\pm 11.32}$& $55.81_{\pm 10.86}$& $39.63_{\pm 9.37}$\\
    \midrule
    \texttt{GRACE(LP)}       & $35.53_{\pm 9.68}$& $28.47_{\pm 8.44}$& $51.97_{\pm 8.91}$& $\underline{53.33}_{\pm 12.36}$& $62.77_{\pm 12.07}$& $58.12_{\pm 10.58}$& $40.87_{\pm 8.36}$\\
    \texttt{DGI(LP)}         & $34.52_{\pm 9.72}$& $28.41_{\pm 8.12}$& $47.92_{\pm 9.93}$& $51.05_{\pm 11.28}$& $61.71_{\pm 11.66}$& $58.41_{\pm 8.80}$& $42.37_{\pm 9.31}$\\
    \texttt{GraphMAE(LP)}    & $36.32_{\pm 8.91}$& $30.70_{\pm 7.98}$& $\mathbf{53.54}_{\pm 9.78}$& $48.36_{\pm 12.36}$& $60.89_{\pm 10.70}$& $64.06_{\pm 8.02}$& $\underline{44.57}_{\pm 8.13}$\\
    \midrule
    \texttt{GPPT}        & $27.54_{\pm 10.50}$& $22.57_{\pm 6.51}$& $42.57_{\pm 10.33}$& $33.79_{\pm 18.30}$& $36.54_{\pm 16.82}$& $34.47_{\pm 13.04}$& $29.32_{\pm 9.30}$\\
    \texttt{GPF}         & $34.09_{\pm 9.76}$& $28.19_{\pm 8.16}$& $51.97_{\pm 8.80}$& $52.45_{\pm 12.52}$& $62.13_{\pm 11.87}$& $59.15_{\pm 10.30}$& $41.55_{\pm 8.37}$\\
    \texttt{GraphPrompt} & $47.45_{\pm 10.18}$& $36.12_{\pm 8.89}$& $51.88_{\pm 9.22}$& $43.44_{\pm 12.26}$& $59.00_{\pm 9.60}$& $54.10_{\pm 9.21}$& $34.39_{\pm 7.69}$\\
    \texttt{EdgePrompt}& $32.69_{\pm 7.64}$& $28.35_{\pm 7.53}$& $51.08_{\pm 8.70}$& $51.09_{\pm 11.15}$& $60.72_{\pm 11.59}$& $53.89_{\pm 8.29}$& $43.60_{\pm 9.16}$\\
    \texttt{DAGPrompT}   & $43.93_{\pm 9.06}$& $39.81_{\pm 10.22}$& $49.88_{\pm 9.74}$& $35.20_{\pm 10.99}$& $49.70_{\pm 7.77}$& $\underline{64.84}_{\pm 9.31}$&$26.50_{\pm 7.72}$\\
    \midrule
    \texttt{GraphLoRA-S}& $\underline{56.25}_{\pm 9.13}$& $\underline{48.10}_{\pm 10.22}$& $52.68_{\pm 12.45}$& $48.41_{\pm 10.03}$& $\underline{66.34}_{\pm 10.96}$& $62.72_{\pm 8.22}$& $40.07_{\pm 8.97}$\\
    \midrule
    \texttt{GP2F}(Ours) & $\mathbf{57.80}_{\pm 8.67}$& $\mathbf{51.55}_{\pm 11.73}$& $\underline{53.05}_{\pm 10.64}$& $\mathbf{54.89}_{\pm 10.39}$& $\mathbf{67.60}_{\pm 10.93}$& $\mathbf{68.33}_{\pm 7.08}$& $\mathbf{45.37}_{\pm 8.72}$\\
    \bottomrule
    \end{tabular}
    }
\end{table*}
\begin{table*}[t]
  \caption{Accuracy of 50-shot graph classification across datasets pre-trained on \textit{PROTEINS}. AUC denotes the AUROC and ACC denotes Accuracy. Best in \textbf{bold} and second best \underline{underlined}. }
  \label{tab:1-shot-graph}
  \centering
     \resizebox{0.95\textwidth}{!}{%

  \small
  \begin{tabular}{lcccccc}
    \toprule
   \textbf{\text{Method}}  & \textbf{COX2}$_{\text{(AUC)}}$ & \textbf{BZR}$_{\text{(AUC)}}$ & \textbf{DD}$_{\text{(AUC)}}$ & \textbf{PROTEINS}$_{\text{(AUC)}}$ & \textbf{MUTAG}$_{\text{(AUC)}}$ & \textbf{ENZYMES}$_{\text{(ACC)}}$ \\
    \midrule
    \texttt{GRACE(FT)}       & $\underline{63.85}_{\pm 5.95}$ & $67.04_{\pm 5.77}$ & $65.46_{\pm 4.19}$ & $63.30_{\pm 5.50}$ & $72.44_{\pm 5.31}$ & $23.50_{\pm 2.47}$ \\
    \midrule
    \texttt{GRACE(LP)}    & $62.39_{\pm 5.93}$ & $65.74_{\pm 6.08}$ & $66.28_{\pm 3.09}$ & $62.85_{\pm 4.93}$ & $\underline{72.85}_{\pm 4.95}$ & $22.91_{\pm 2.69}$ \\
    \texttt{DGI(LP)}      & $62.77_{\pm 6.18}$ & $66.80_{\pm 6.22}$ & $66.34_{\pm 3.55}$ & $60.83_{\pm 6.95}$ & $69.90_{\pm 10.56}$ & $22.46_{\pm 2.83}$ \\
    \texttt{GraphMAE(LP)} & $63.02_{\pm 5.79}$ & $\underline{67.35}_{\pm 6.31}$ &$\underline{66.46}_{\pm 2.70}$& $64.15_{\pm 5.14}$ & $72.12_{\pm 5.77}$ & $24.41_{\pm 2.93}$\\
    \midrule
    \texttt{GPPT}     & $60.08_{\pm 6.60}$ & $62.98_{\pm 7.27}$ & $66.18_{\pm 3.76}$ & $65.41_{\pm 8.24}$ & $69.47_{\pm 9.98}$ & $22.71_{\pm 2.93}$ \\
    \texttt{GPF}      & $62.52_{\pm 5.90}$ & $65.77_{\pm 6.34}$ & $66.03_{\pm 4.91}$ & $61.41_{\pm 5.98}$ & $71.90_{\pm 6.66}$ & $22.42_{\pm 2.68}$ \\
    \texttt{GraphPrompt} & $62.67_{\pm 6.32}$ & $62.12_{\pm 6.88}$ & $63.90_{\pm 4.02}$ & $55.97_{\pm 5.32}$ & $64.16_{\pm 9.48}$ & $23.29_{\pm 3.02}$ \\
    \texttt{EdgePrompt} & $58.22_{\pm 6.71}$ & $63.39_{\pm 6.65}$ & $58.01_{\pm 8.23}$ & $57.54_{\pm 7.60}$ & $71.06_{\pm 8.18}$ & $22.24_{\pm 2.69}$ \\
    \texttt{DAGPrompT} & $63.50_{\pm 6.07}$ & $65.47_{\pm 5.84}$ & $66.35_{\pm 2.77}$ & $52.87_{\pm 5.08}$ & $67.64_{\pm 9.70}$ & $23.44_{\pm 2.69}$ \\
    \midrule
    \texttt{GraphLoRA-S} & $60.30_{\pm 6.61}$ & $65.46_{\pm 6.06}$ & $65.18_{\pm 4.09}$ & $\underline{66.73}_{\pm 3.96}$ & $67.25_{\pm 10.78}$ & $\mathbf{24.50}_{\pm 2.74}$ \\
    \midrule
    \texttt{GP2F}(Ours) & $\mathbf{66.87}_{\pm 5.79}$ & $\mathbf{69.18}_{\pm 5.00}$ & $\textbf{66.49}_{\pm 3.37}$ & $\mathbf{66.78}_{\pm 3.14}$ & $\mathbf{73.19}_{\pm 8.20}$ & $\underline{24.49}_{\pm 2.78}$ \\
    \bottomrule
  \end{tabular}
  }
\end{table*}
\subsection{Experimental setup}
We evaluate the performance on nine node classification benchmark datasets including \textit{Cora}, \textit{CiteSeer}, \textit{PubMed}~\cite{citation_cora_citeseer_pubmed}, \textit{Computers}, \textit{Photo}, \textit{CS}~\cite{amazon_computers_photo-coauthor_cs}, \textit{WikiCS}~\cite{wiki-cs}, 
\textit{ogbn-arxiv}, and \textit{ogbn-products}~\cite{arxiv-products} 
and six graph classification datasets including \textit{PROTEINS}~\cite{PROTEINS}, \textit{MUTAG}~\cite{MUTAG}, \textit{DD}~\cite{DD}, \textit{COX2}, \textit{BZR}~\cite{TUDataset} and \textit{ENZYMES}~\cite{ENZYMES}. 
We compare our model against four types of methods: (1) Fine-tuning (\texttt{FT}). (2) Self-supervised pre-training methods: \texttt{DGI}~\cite{DGI}, \texttt{GRACE}~\cite{GRACE}, and \texttt{GraphMAE}~\cite{GraphMAE}. (3) Graph prompt learning baselines: \texttt{GPPT}~\cite{GPPT}, \texttt{GPF}~\cite{GPF}, \texttt{GraphPrompt}~\cite{GraphPrompt}, \texttt{EdgePrompt}~\cite{EdgePrompt}, and \texttt{DAGPrompT}~\cite{DAGPrompT}. (4) Cross-domain graph prompt learning methods: \texttt{GraphLoRA}~\cite{GraphLoRA}. \texttt{GraphLoRA-S} denotes \texttt{GraphLoRA} without the SMMD loss, because source domain features are unavailable in cross-domain prompt learning. 
Cross-dataset transfer is treated as a practical form of cross-domain transfer, as methods like GRACE are not designed for multi-domain settings. Since our benchmarks cover citation, co-purchase, co-authorship, and webpage networks, they provide an effective testbed for cross-domain generalization.
Detailed experimental settings and dataset statistics are provided in Appendix \ref{subsec:experimental setup}.

\begin{figure*}[t] 
    \centering
    \includegraphics[width=\textwidth]{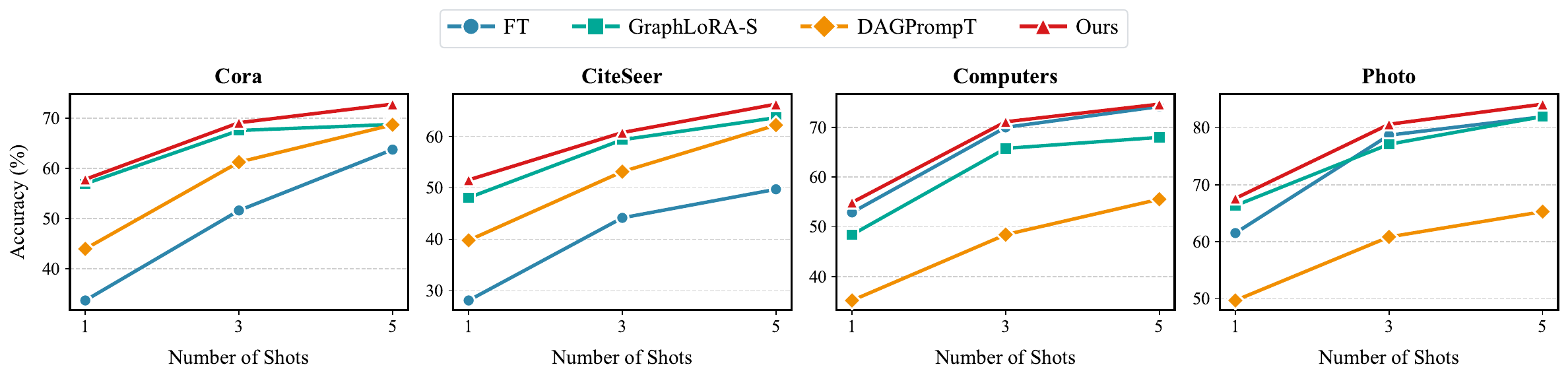}
    \caption{Accuracy of 3-shot and 5-shot cross-domain node classification experiments using \texttt{GRACE} for pre-training.}
    \label{fig:3-5shot}
\end{figure*}
\subsection{Performance of Cross-Domain Classification (RQ1)}
To evaluate the effectiveness of \texttt{GP2F}, we conduct extensive experiments on seven node classification datasets and six graph classification datasets. For GPL methods, encoder is pre-trained with \texttt{GRACE}. The results are summarized in Table \ref{tab:1-shot-node}, Table \ref{tab:1-shot-graph}, and Fig. \ref{fig:3-5shot}. Overall, \texttt{GP2F} achieves either the best or second-best performance in most cases, demonstrating its superiority in cross-domain scenarios.

\textbf{1-shot Node Classification.} As shown in Table \ref{tab:1-shot-node}, our method outperforms existing GPL methods across most datasets. Specifically, compared to the best baseline, \texttt{GP2F} improves the performance by 1.66\% on average. In particular, the improvements on \textit{CiteSeer} and \textit{CS} datasets are more significant, reaching 3.45\% and 3.49\%, respectively. Another important observation is that Linear Probing (LP) methods in the second block show very stable performance. LP achieves the best result on \textit{PubMed} and is the runner-up on \textit{Computers} and \textit{WikiCS}. It also remains competitive with GPL methods on other datasets. This further supports our view that combining a frozen pre-trained model with an adapted branch can yield excellent results in cross-domain tasks. Meanwhile \texttt{FT} performs worse than LP on all datasets because in 1-shot scenarios, where labels are extremely scarce, full fine-tuning may cause overfitting and forgetting of general knowledge within the pre-trained GNN, leading to performance degradation.

\begin{table}[t] 
  \caption{Accuracy of 1-shot cross-domain node classification following the setting of~\cite{MDGFM}. Methods marked with * are reported from~\cite{MDGFM}. Best in \textbf{bold}.}
  \label{tab:different_pretrain_dataset}
  \centering
   \resizebox{0.9\columnwidth}{!}{%
    \begin{tabular}{llll}
    \toprule
    \textbf{\textrm{Method}} & \textbf{\textrm{Cora}} & \textbf{\textrm{CiteSeer}} & \textbf{\textrm{PubMed}} \\
    \midrule
    \texttt{GCOPE}*& $34.23_{\pm 8.16}$& $39.05_{\pm 8.82}$& $44.85_{\pm 6.72}$\\
 \texttt{MDGPT}*& $39.54_{\pm 9.02}$& $39.24_{\pm 8.95}$&$45.39_{\pm 11.01}$\\
    \texttt{MDGFM}*& $44.83_{\pm 7.41}$& $42.18_{\pm 6.41}$& $46.84_{\pm 7.31}$\\
 \texttt{GP2F}(Ours)& $\mathbf{49.27}_{\pm 7.88}$& $\mathbf{45.02}_{\pm 7.03}$&$\mathbf{64.62}_{\pm 3.82}$\\
 \bottomrule
    \end{tabular}
    }
\end{table}

\begin{table}[t]
  \caption{Accuracy of node classification on large-scale graph datasets.``OOM" denotes ``Out of Memory". Methods marked with * are reported from~\cite{GraphLoRA}. Best in \textbf{bold}.}
  \label{tab:dataset_stats_results}
  \centering
  \resizebox{0.85\columnwidth}{!}{
    \begin{tabular}{lll}
    \toprule
    \textbf{Method} & \textbf{ogbn-arxiv}& \textbf{ogbn-products}\\
    \midrule
    \# Node numbers & 169,343 & 2,449,029\\
    \# Edge numbers & 1,166,243&  61,859,140 \\
    \midrule
    \texttt{GPPT}*&$65.82_{\pm 0.23}$ &$67.93_{\pm 0.27}$\\
    \texttt{GPF}*& $67.11_{\pm 0.17}$& $74.04_{\pm 0.50}$ \\
    \texttt{GraphPrompt}*& $57.62_{\pm 0.08}$& OOM \\
    \texttt{GraphLoRA}*& $68.61_{\pm 0.20}$& $75.05_{\pm 0.12}$ \\
    \midrule
    \texttt{GP2F}(Ours)  & $\mathbf{68.76}_{\pm 0.59}$& $\mathbf{76.91}_{\pm 0.43}$ \\
    \bottomrule
    \end{tabular}
    }
\end{table}
\textbf{50-shot Graph Classification.} We further conduct few-shot graph classification, with results presented in Table \ref{tab:1-shot-graph}. We report the AUROC (AUC) for binary classification datasets and Accuracy (ACC) for multi-class datasets. \texttt{GP2F} achieves the best or second-best performance on all datasets, with the most significant improvements appear on \textit{BZR} (1.83\%) and \textit{COX2} (3.02\%). Similar to node classification tasks, we observe that LP-based methods are competitive with state-of-the-art GPL methods. 

\textbf{1/3/5-shot Node Classification.} We also evaluate the performance of \texttt{GP2F} in 3-shot and 5-shot scenarios. As illustrated in Fig. \ref{fig:3-5shot}, our method consistently outperforms baselines. Additionally, we found that \texttt{FT} becomes more stable on larger datasets with more labels. For example, \texttt{FT} performs well on \textit{Computers} and \textit{Photo}, achieving significant improvement in 5-shot scenario on \textit{Computers}. However, it still struggles on smaller datasets like \textit{Cora} and \textit{CiteSeer}.

\begin{figure*}[t] 
    \centering
    \includegraphics[width=\textwidth]{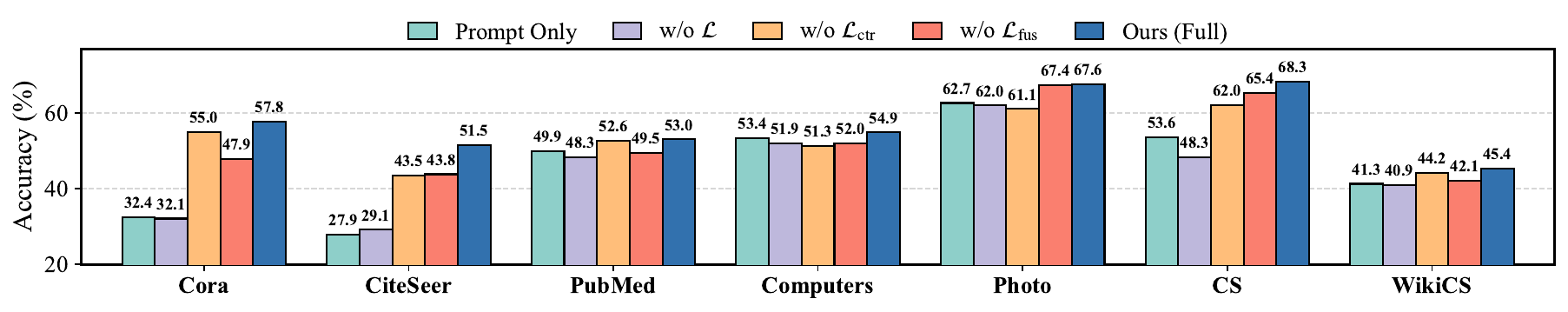}
    \caption{Analysis of key components in \texttt{GP2F} via 1-shot node classification pre-trained with \texttt{GRACE}.}
    \label{fig:ablation}
\end{figure*}
\subsection{Robustness Across Pre-training Strategies (RQ2)}
We evaluate the robustness of our method across different pre-training strategies, including \texttt{DGI} and \texttt{GraphMAE}. Specifically, \texttt{DGI} uses a contrastive objective with cross-entropy loss, while \texttt{GraphMAE} is a generative approach. The results in Table \ref{tab:different_pretrain_method} show that our method achieves the best or second-best performance on most datasets under both pre-training methods. This indicates that \texttt{GP2F} is compatible with various pre-trained models, regardless of whether they use contrastive or generative objectives. These results further demonstrate that \texttt{GP2F} can effectively utilize knowledge from different types of pre-training paradigms. 
\begin{table*}[tp] 
  \caption{Accuracy of 1-shot node classification with different pre-training methods. Best in \textbf{bold}.} 
  \label{tab:different_pretrain_method}
  \centering
   \resizebox{1.0\textwidth}{!}{%
    \begin{tabular}{llllllll}
    \toprule
     \textbf{Pre-training + Tuning}& \textbf{\textrm{Cora}} & \textbf{\textrm{CiteSeer}} & \textbf{\textrm{PubMed}} & \textbf{\textrm{Computers}} & \textbf{\textrm{Photo}} & \textbf{\textrm{CS}}  & \textbf{\textrm{WikiCS}} \\
    \midrule
    \texttt{DGI+FT}          & $34.17_{\pm 10.01}$& $28.02_{\pm 7.70}$& $51.46_{\pm 8.81}$& $48.40_{\pm 12.37}$& $61.53_{\pm 11.28}$& $54.07_{\pm 10.21}$& $39.75_{\pm 9.46}$\\
    \texttt{DGI+GraphLoRA-S}& $55.44_{\pm 8.36}$& $47.85_{\pm 10.74}$& $50.89_{\pm 13.15}$& $48.50_{\pm 10.75}$& $65.03_{\pm 10.71}$& $58.77_{\pm 6.46}$& $45.34_{\pm 7.95}$\\
    \texttt{DGI+DAGPrompT}         & $49.76_{\pm 9.45}$& $45.73_{\pm 10.11}$& $51.29_{\pm 9.81}$& $30.83_{\pm 12.23}$& $53.59_{\pm 8.85}$& $66.90_{\pm 8.02}$& $28.54_{\pm 6.67}$\\
    \texttt{DGI+GP2F}(Ours)     & $\mathbf{59.14}_{\pm 8.80}$& $\mathbf{49.22}_{\pm 9.73}$& $\mathbf{51.80}_{\pm 9.64}$& $\mathbf{54.64}_{\pm 10.09}$& $\mathbf{67.29}_{\pm 10.60}$& $\mathbf{69.74}_{\pm 8.04}$& $\mathbf{45.95}_{\pm 8.88}$\\
    \midrule
    \texttt{GraphMAE+FT}          & $35.12_{\pm 10.84}$& $30.30_{\pm 7.52}$& $52.17_{\pm 9.14}$& $51.29_{\pm 12.86}$& $62.22_{\pm 11.39}$& $57.89_{\pm 9.67}$& $42.76_{\pm 9.13}$\\
    \texttt{GraphMAE+GraphLoRA-S}& $56.04_{\pm 
9.24}$& $47.01_{\pm 9.05}$& $48.00_{\pm 14.48}$& $50.58_{\pm 11.97}$& $65.32_{\pm 11.26}$& $64.47_{\pm 9.07}$& $45.87_{\pm 8.96}$\\
    \texttt{GraphMAE+DAGPrompT}         & $55.50_{\pm 9.82}$& $\mathbf{55.63}_{\pm 10.42}$& $\mathbf{52.84}_{\pm 9.72}$& $43.44_{\pm 11.72}$& $56.07_{\pm 9.56}$& $66.46_{\pm 7.88}$& $34.43_{\pm 7.36}$\\
    \texttt{GraphMAE+GP2F}(Ours)    & $\mathbf{56.08}_{\pm 10.27}$& $51.26_{\pm 9.65}$& $52.18_{\pm 10.50}$& $\mathbf{51.50}_{\pm 9.41}$& $\mathbf{66.59}_{\pm 10.40}$& $\mathbf{67.22}_{\pm 7.68}$& $\mathbf{48.11}_{\pm 8.44}$\\
    \bottomrule
    \end{tabular}
    }
\end{table*}

\begin{figure}[t] 
    \centering
    \includegraphics[width=\columnwidth]{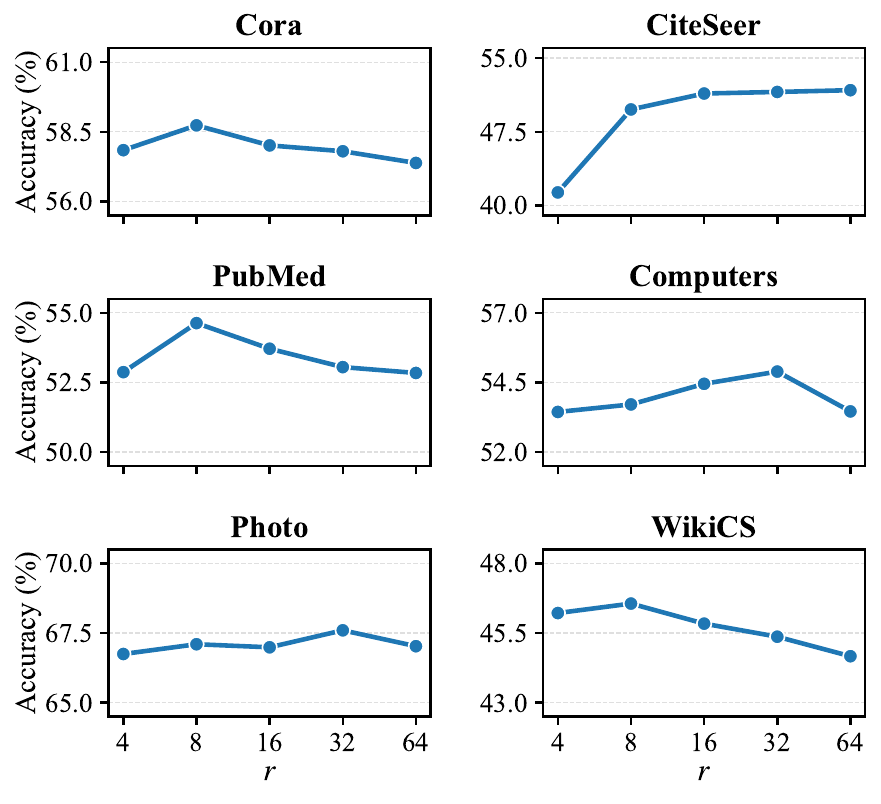}
    \caption{Hyperparameter analysis of $r$ for 1-shot node classification with \texttt{GRACE} pre-trained.}
    \label{fig:hyperparameter}
\end{figure}

\subsection{Comparison with Graph Foundation Models (RQ3)}
We evaluate the performance of our method against existing Graph Foundation Models (GFMs) including \texttt{GCOPE}~\cite{GCOPE}, \texttt{MDGPT}~\cite{MDGPT} and \texttt{MDGFM}~\cite{MDGFM} by pre-training with \texttt{GRACE}. As shown in Table \ref{tab:different_pretrain_dataset}, our method consistently achieves the best performance across three datasets. Interestingly, we observe that pre-training on multiple domains sometimes results in lower accuracy than pre-training on a single domain. This may be because massive semantic and topological gaps between different source domains introduce additional noise during the pre-training phase, which ultimately degrades the quality of the pre-trained representations.   

\subsection{Scalability on Large-Scale Datasets (RQ4)}
We evaluate the applicability of \texttt{GP2F} on large-scale graph datasets by conducting experiments on \textit{ogbn-arxiv} and \textit{ogbn-products}  with standard data splits~\cite{arxiv-products}. We adopt a sampling strategy during training and evaluation due to the large data scale. 
Specifically, for each node, we sample 20 neighbors at each layer, corresponding to 2-hop neighborhood sampling for our two-layer GCN. The results show that \texttt{GP2F} maintains stable performance on these large-scale datasets and consistently outperforms baselines.

\subsection{Model Analysis (RQ5)}
To validate the effectiveness of each component in \texttt{GP2F}, we conduct ablation studies and hyperparameter analysis experiments, as shown in Fig. \ref{fig:ablation} and Fig. \ref{fig:hyperparameter}. 

\textbf{Ablation Study.} We design four variants: (1) w/o $\mathcal{L}$, which removes both $\mathcal{L}_\mathrm{ctr}$ and $\mathcal{L}_\mathrm{fus}$; (2) w/o $\mathcal{L}_\mathrm{ctr}$, which removes $\mathcal{L}_\mathrm{ctr}$; (3) w/o $\mathcal{L}_\mathrm{fus}$, which removes $\mathcal{L}_\mathrm{fus}$; and (4) Prompt Only, which keeps only the adapted branch. 
As shown in Fig. \ref{fig:ablation}, removing any loss function leads to a performance drop across all datasets, which shows the effectiveness of both $\mathcal{L}_\mathrm{ctr}$ and $\mathcal{L}_\mathrm{fus}$. Furthermore, we observe that the variant without any loss (w/o $\mathcal{L}$) performs the worst in most cases. This proves that directly merging the representations of the two branches without constraints leads to a decrease in performance, due to the semantic conflicts between them.

\textbf{Hyperparameter Analysis.} To examine the sensitivity of \texttt{GP2F} to key hyperparameters, we analyze the adapter projection dimension $r$, as is shown in Fig.~\ref{fig:hyperparameter}. The optimal $r$ varies across datasets. In particular, smaller datasets such as \textit{Cora} and \textit{CiteSeer} achieve better performance with smaller $r$, while larger datasets such as \textit{Computers} and \textit{Photo} favor larger $r$. Since $r$ determines the learning capacity of the adapters, excessively large $r$ may lead to overfitting, whereas overly small $r$ may result in underfitting.

\section{Conclusion}
In this work, we investigate the factors that contribute to the effectiveness of existing Graph Prompt Learning (GPL) methods in cross-domain settings. We reveal that LP and FT serve as strong baselines in cross-domain scenarios through experiment and jointly optimizing a frozen branch and an adapted branch yields a smaller evaluation error than relying on either branch alone through theoretical analysis. Guided by this finding, we propose GP2F, a novel cross-domain graph prompt learning method. GP2F integrates a frozen pre-trained GNN branch and an adapter-based adapted branch, and employs a contrastive alignment loss $\mathcal{L}_\mathrm{ctr}$ together with a topology-consistent fusion loss $\mathcal{L}_\mathrm{fus}$ to jointly regulate cross-branch alignment and fusion. Extensive experiments demonstrate that GP2F consistently outperforms existing methods in cross-domain few-shot settings. Overall, this work reveals the underlying factors of the effectiveness of GPL in cross-domain settings and propose a novel GPL method for cross-domain scenarios.

\section*{Impact Statement}
This paper presents work whose goal is to advance the field of Machine
Learning. There are many potential societal consequences of our work, none
which we feel must be specifically highlighted here.
\section*{Acknowledgments}
This work was supported by the National Natural Science Foundation of China (No. 62422210, No. 62276187, No.92370111, and No. 62272340)
\bibliography{ref}
\bibliographystyle{icml2026}

\newpage
\appendix
\onecolumn
\section{Theorems}
\subsection{Proof for Lemma~\ref{lem:mse-quadratic}}
\label{sec:proof_for_mse_quadratic}
\begin{proof}
From Eq.~\eqref{eq:error-form},
\begin{equation}
    \mathrm{MSE}(\lambda)
    = \mathbb{E}\big[\|\lambda \boldsymbol{\epsilon}_i^g+(1-\lambda)\boldsymbol{\epsilon}_i^a\|^2\big]
    = \lambda^2\sigma_g^2 + (1-\lambda)^2\sigma_a^2 + 2\lambda(1-\lambda)\rho.
\end{equation}
Expanding $(1-\lambda)^2$ and collecting terms yields
$\mathrm{MSE}(\lambda)=A\lambda^2+B\lambda+C$ with
$A=\sigma_g^2+\sigma_a^2-2\rho$, $B=2(\rho-\sigma_a^2)$, and $C=\sigma_a^2$.
Since $A>0$ by Assumption~\ref{ass:error-model}, $\mathrm{MSE}(\lambda)$ is strictly convex and thus has a unique minimizer
$\lambda^\star=-B/(2A)=(\sigma_a^2-\rho)/(\sigma_g^2+\sigma_a^2-2\rho)$.
Moreover, $\rho<\sigma_a^2$ implies $\lambda^\star>0$ and $\rho<\sigma_g^2$ implies $\lambda^\star<1$. Then we substitute $\lambda^\star$ to give the minimum value:
\begin{equation}
\label{eq:mse-lambda-star}
\mathrm{MSE}(\lambda^\star)
= C-\frac{B^2}{4A}
= \sigma_a^2-\frac{(\sigma_a^2-\rho)^2}{\sigma_g^2+\sigma_a^2-2\rho}
= \sigma_g^2-\frac{(\sigma_g^2-\rho)^2}{\sigma_g^2+\sigma_a^2-2\rho}.
\end{equation}
Under Assumption~\ref{ass:error-model}, we have $A>0$, $\sigma_a^2-\rho>0$, and $\sigma_g^2-\rho>0$.
Therefore, the fractions in Eq.~\eqref{eq:mse-lambda-star} are strictly positive, which implies
$\mathrm{MSE}(\lambda^\star)<\min\{\sigma_g^2,\sigma_a^2\}$.
\end{proof}

\subsection{Proof for Theorem~\ref{thm:mse-improvement}}
\label{sec:proof_for_mse_improvement}
\begin{proof}
By Lemma~\ref{lem:mse-quadratic}, $\mathrm{MSE}(\lambda)$ is strictly convex and attains its unique minimum at $\lambda^\star\in(0,1)$.
Hence $\mathrm{MSE}(\lambda^\star)<\mathrm{MSE}(0)$ and $\mathrm{MSE}(\lambda^\star)<\mathrm{MSE}(1)$.
Noting that $\mathrm{MSE}(0)=\mathbb{E}\|\mathbf{h}_i^a-\mathbf{z}_i\|^2$ and
$\mathrm{MSE}(1)=\mathbb{E}\|\mathbf{h}_i^g-\mathbf{z}_i\|^2$ completes the proof.
\end{proof}

To connect representation estimation to classification, we show that under Assumption~\ref{ass:latent-linear},
a smaller $\mathbb{E}\|\tilde{\mathbf{z}}_i(\lambda)-\mathbf{z}_i\|^2$ yields a tighter upper bound on the misclassification probability.

\subsection{Assumption~\ref{ass:bounded-norm}}
\begin{assumption}[Bounded classifier norm]
\label{ass:bounded-norm}
There exists a constant $B>0$ such that $\|\mathbf{w}_c^\star\|\le B$ for all $c\in\{1,\ldots,C\}$.
\end{assumption}

\subsection{Corollary~\ref{cor:risk-bound}}
\begin{corollary}[Margin-based misclassification bound]
\label{cor:risk-bound}
Assume Assumptions~\ref{ass:latent-linear}, \ref{ass:bounded-norm}, and \ref{ass:error-model}.
Let $\hat{y}_i(\lambda):=\arg\max_{c}(\mathbf{w}_c^\star)^\top \tilde{\mathbf{z}}_i(\lambda)$.
Then for any $\lambda$,
\[
\mathbb{P}\big(\hat{y}_i(\lambda)\neq y_i\big)
\le
\frac{4(C-1)B^2}{\gamma^2}\cdot
\mathbb{E}\big[\|\tilde{\mathbf{z}}_i(\lambda)-\mathbf{z}_i\|^2\big].
\]
In particular, Theorem~\ref{thm:mse-improvement} implies a strictly tighter upper bound at $\lambda^\star$.
\end{corollary}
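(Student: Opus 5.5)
The plan is to reduce misclassification to a large deviation of $\tilde{\mathbf z}_i(\lambda)$ from $\mathbf z_i$ along a pairwise classifier direction, and then control that deviation with Chebyshev/Markov's inequality. Write $\boldsymbol{\delta}_i:=\tilde{\mathbf z}_i(\lambda)-\mathbf z_i=\lambda\boldsymbol\epsilon_i^g+(1-\lambda)\boldsymbol\epsilon_i^a$, using Eq.~\eqref{eq:error-form}. If $\hat y_i(\lambda)\neq y_i$, then some $c\neq y_i$ satisfies $(\mathbf w_c^\star-\mathbf w_{y_i}^\star)^\top\tilde{\mathbf z}_i(\lambda)\ge 0$. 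Substituting $\tilde{\mathbf z}_i=\mathbf z_i+\boldsymbol\delta_i$ and using the margin in Assumption~\ref{ass:latent-linear}, $(\mathbf w_{y_i}^\star-\mathbf w_c^\star)^\top\mathbf z_i\ge\gamma$, gives
\[
(\mathbf w_c^\star-\mathbf w_{y_i}^\star)^\top\boldsymbol\delta_i\ge\gamma .
\]
Ties in the $\arg\max$ are also covered, since the inequality is non-strict.

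Next I bound each such pairwise event. By Cauchy--Schwarz and Assumption~\ref{ass:bounded-norm}, $\|\mathbf w_c^\star-\mathbf w_{y_i}^\star\|\le 2B$. So the event above implies $2B\|\boldsymbol\delta_i\|\ge\gamma$, and also $\big((\mathbf w_c^\star-\mathbf w_{y_i}^\star)^\top\boldsymbol\delta_i\big)^2\ge\gamma^2$. Markov's inequality on the squared quantity then gives
\[
\mathbb P(\text{pair } c \text{ fails})\le \frac{\mathbb E\big[((\mathbf w_c^\star-\mathbf w_{y_i}^\star)^\top\boldsymbol\delta_i)^2\big]}{\gamma^2}\le \frac{4B^2\,\mathbb E\|\boldsymbol\delta_i\|^2}{\gamma^2}.
\]
A union bound over the $C-1$ competitors $c\neq y_i$ produces exactly the constant $4(C-1)B^2/\gamma^2$. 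The expectation is finite because of the second-moment bounds in Assumption~\ref{ass:error-model}: $\mathbb E\|\boldsymbol\delta_i\|^2\le 2\lambda^2 M+2(1-\lambda)^2M$.

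The one subtle point is the randomness of $y_i$ and $\mathbf w_{y_i}^\star$, because the node is sampled at random. I will handle this by conditioning on the node (and so on $y_i$ and $\mathbf z_i$), applying the bound conditionally, and then taking total expectation. The right-hand side is linear in $\mathbb E\|\boldsymbol\delta_i\|^2$ and the constant does not depend on the node, so the conditional bounds average to the stated unconditional one. Alternatively, I can apply Markov directly to the variable $\|\boldsymbol\delta_i\|^2$ with threshold $\gamma^2/(4B^2)$. That route avoids conditioning and even drops the $(C-1)$ factor, so the stated bound holds a fortiori. I will use this route to sidestep the conditioning issue.

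Finally, for the "in particular" claim, I evaluate the bound at $\lambda^\star$. By Theorem~\ref{thm:mse-improvement}, $\mathrm{MSE}(\lambda^\star)<\min\{\mathrm{MSE}(0),\mathrm{MSE}(1)\}$, and multiplying by the positive constant preserves the strict inequality. Hence the bound at $\lambda^\star$ is strictly smaller than the bounds for either single branch. I expect no real obstacle; the only care needed is the random-label issue just addressed.
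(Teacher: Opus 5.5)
Your proof is correct. Its first half follows the paper's argument exactly: the pairwise reduction via the margin, Markov's inequality on the squared projection $(\mathbf{w}_c^\star-\mathbf{w}_{y_i}^\star)^\top\boldsymbol{\delta}_i$, the bound $\|\mathbf{w}_c^\star-\mathbf{w}_{y_i}^\star\|\le 2B$, and a union bound over the $C-1$ competitors. Your treatment of the ``in particular'' clause also matches the paper.

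Where you genuinely depart is in the route you finally commit to. You note that any misclassification forces $\gamma\le 2B\|\boldsymbol{\delta}_i\|$, and you apply Markov once to $\|\boldsymbol{\delta}_i\|^2$ at threshold $\gamma^2/(4B^2)$. This is simpler, needs no union bound, and gives the bound without the factor $C-1$. The stated inequality then follows for $C\ge 2$, and for $C=1$ misclassification cannot occur.

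The paper's route keeps the direction-dependent intermediate quantity $\mathbb{E}\big[\sum_{c\neq y_i}(\mathbf{u}_{i,c}^\top\boldsymbol{\delta}_i)^2\big]/\gamma^2$, with $\mathbf{u}_{i,c}:=\mathbf{w}_{y_i}^\star-\mathbf{w}_c^\star$. That quantity can be smaller than a norm-based bound when the error is poorly aligned with the class-difference directions. Once Cauchy--Schwarz is applied, however, that advantage is gone and the factor $C-1$ is pure slack.

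Your worry about the random label is not a real obstacle for the paper's route either. The chain $\mathbf{1}\{\hat{y}_i(\lambda)\neq y_i\}\le\sum_{c\neq y_i}(\mathbf{u}_{i,c}^\top\boldsymbol{\delta}_i)^2/\gamma^2\le 4(C-1)B^2\|\boldsymbol{\delta}_i\|^2/\gamma^2$ holds realization by realization, since the sum always has exactly $C-1$ terms. So one can simply take expectations.

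One small wording fix: $\|\mathbf{w}_c^\star-\mathbf{w}_{y_i}^\star\|\le 2B$ comes from the triangle inequality. Cauchy--Schwarz is what passes from the projection to $\|\boldsymbol{\delta}_i\|$.
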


\begin{proof}
For any $c\neq y_i$, let $\mathbf{u}_{i,c}:=\mathbf{w}_{y_i}^\star-\mathbf{w}_c^\star$.
By Assumption~\ref{ass:latent-linear}, $\mathbf{u}_{i,c}^\top \mathbf{z}_i \ge \gamma$.
Let $\boldsymbol{\delta}_i(\lambda):=\tilde{\mathbf{z}}_i(\lambda)-\mathbf{z}_i$.
If $\hat{y}_i(\lambda)\neq y_i$, then for some $c\neq y_i$,
$\mathbf{u}_{i,c}^\top \tilde{\mathbf{z}}_i(\lambda)\le 0$, which implies
$|\mathbf{u}_{i,c}^\top \boldsymbol{\delta}_i(\lambda)|\ge \gamma$.
By the union bound and Markov's inequality,
\[
\mathbb{P}\big(\hat{y}_i(\lambda)\neq y_i\big)
\le \sum_{c\neq y_i}\frac{\mathbb{E}\big[(\mathbf{u}_{i,c}^\top \boldsymbol{\delta}_i(\lambda))^2\big]}{\gamma^2}.
\]
Using $(\mathbf{u}^\top\boldsymbol{\delta})^2\le \|\mathbf{u}\|^2\|\boldsymbol{\delta}\|^2$ and
$\|\mathbf{u}_{i,c}\|\le \|\mathbf{w}_{y_i}^\star\|+\|\mathbf{w}_c^\star\|\le 2B$ yields the claim.
\end{proof}
\section{Algorithm}
\begin{algorithm}[H]
\caption{Training Procedure of \texttt{GP2F}}
\label{alg:gp2f}
\begin{algorithmic}
\STATE {\bfseries Input:} Graph $\mathbf{G}=(\mathbf{A},\mathbf{X})$ with labels $y$;
Pre-trained GNN $g_{\theta^*}$;
Adapters $\mathcal{A}$;
Projector $Proj(\cdot)$;
Linear classifier $\mathbf{C}(\cdot)$;
Temperature parameters $\tau_{\mathrm{ctr}}$ and $\tau_{\mathrm{fus}}$;
Loss weights $\lambda_1$ and $\lambda_2$;
Learnable scaling factors $\alpha$ and $\beta=\{\beta^{(l)}\}_{l=1}^{L}$.
\STATE {\bfseries Output:} $\mathcal{A}$; $Proj(\cdot)$; $\mathbf{C}(\cdot)$; scaling factors $\alpha$ and $\beta$.
\STATE {\bfseries Initialization:} Parameters of $\mathcal{A}$, $Proj(\cdot)$ and $\mathbf{C}(\cdot)$; $\alpha$ and $\beta$; freeze $g_{\theta^*}$.
\WHILE{not converged}

  \STATE \texttt{\# Dimension alignment}
  \STATE Project node features: $\mathbf{H}^{(0)} \leftarrow Proj(\mathbf{X})$;

  \STATE \texttt{\# Dual-branch encoding}
  \STATE Frozen-branch encoding: $\mathbf{H}_{\mathrm{pre}} \leftarrow g_{\theta^*}(\mathbf{A},\mathbf{H}^{(0)})$; \hfill (Eq.~\eqref{eq:encode_pre})
  \STATE Adapter-branch encoding with layer-wise adapters and scaling factors:
  $\mathbf{H}_{\mathrm{adp}} \leftarrow g_{\theta^*}(\mathbf{A},\mathbf{H}^{(0)},\mathcal{A},\beta)$; \hfill (Eq.~\eqref{eq:encode_adp_layer})

  \STATE \texttt{\# Cross-branch contrastive alignment}
  \STATE Construct positive pairs: $\mathcal{P}(\mathbf{A})$; \hfill (Eq.~\eqref{eq:ctr_positive})
  \STATE Compute contrastive loss:
  $\mathcal{L}_{\mathrm{ctr}}(\mathbf{H}_{\mathrm{pre}},\mathbf{H}_{\mathrm{adp}},\mathcal{P})$;
  \hfill (Eqs.~\eqref{eq:ctr_single}, \eqref{eq:ctr_full})

  \STATE \texttt{\# Representation fusion}
  \STATE Fuse branch representations:
  $\mathbf{H}_{\mathrm{mix}} = \alpha\ \cdot \mathbf{H}_{\mathrm{pre}} + (1-\alpha) \cdot \mathbf{H}_{\mathrm{adp}}$;
  \hfill (Eq.~\eqref{eq:fusion})

  \STATE \texttt{\# Topology-consistent regularization}
  \STATE Compute normalized self-similarity matrices:
  $\mathbf{S}_{\mathrm{pre}} = \mathrm{Norm}(\mathbf{H}_{\mathrm{pre}}\mathbf{H}_{\mathrm{pre}}^\top)$,
  $\mathbf{S}_{\mathrm{adp}}= \mathrm{Norm}(\mathbf{H}_{\mathrm{adp}}\mathbf{H}_{\mathrm{adp}}^\top)$;
  \hfill (Eq.~\eqref{eq:compute_sim})
  \STATE Compute fused similarity matrix:
  $\mathbf{S}_{\mathrm{mix}} = \alpha\ \cdot \mathbf{S}_{\mathrm{pre}} + (1-\alpha) \cdot \mathbf{S}_{\mathrm{adp}}$;
  \hfill (Eq.~\eqref{eq:fusion_sim})
  \STATE Construct consistency mask:
  $\mathcal{M}(\mathbf{A},\mathbf{S}_{\mathrm{mix}})$;
  \hfill (Eq.~\eqref{eq:fus_mask})
  \STATE Compute fusion loss:
  $\mathcal{L}_{\mathrm{fus}}(\mathbf{S}_{\mathrm{mix}},\mathbf{A},\mathcal{M})$;
  \hfill (Eq.~\eqref{eq:fus_compute})

  \STATE \texttt{\# Downstream classification}
  \STATE Classification loss:
  $\mathcal{L}_{\mathrm{cls}}(\mathbf{C}(\mathbf{H}_{\mathrm{mix}}),y)$;

  \STATE \texttt{\# Overall objective and optimization}
  \STATE Total loss:
  $\mathcal{L} \leftarrow \mathcal{L}_{\mathrm{cls}} + \lambda_1 \mathcal{L}_{\mathrm{ctr}} + \lambda_2 \mathcal{L}_{\mathrm{fus}}$;
  \hfill (Eq.~\eqref{eq:final_loss})
  \STATE Update parameters of $\mathcal{A}$, $Proj(\cdot)$ and $\mathbf{C}(\cdot)$, $\alpha$ and $\beta$ by minimizing $\mathcal{L}$.

\ENDWHILE
\STATE \textbf{return} $\mathcal{A}$, $Proj(\cdot)$, $\mathbf{C}(\cdot)$, $\alpha$, and $\beta$.
\end{algorithmic}
\end{algorithm}

\section{Experiment setting and datasets}\label{subsec:experimental setup}

\subsection{Datasets} 
We introduce the details of the 15 commonly used real-world datasets, 9 for node classification as is shown in Table~\ref{tab:node_dataset}, and 6 for graph classification as is shown in Table \ref{tab:graph_dataset}.
\begin{itemize}
    \item \textit{Cora}, \textit{CiteSeer}, \textit{PubMed}~\cite{citation_cora_citeseer_pubmed}, and \textit{ogbn-arxiv}~\cite{arxiv-products} are citation networks where nodes represent scientific papers and edges denote citation relationships. Node features correspond to bag-of-words representations or word embeddings of the paper content. Labels indicate the academic topic of the paper.   
    \item \textit{Amazon Computers}, \textit{Amazon Photo}~\cite{amazon_computers_photo-coauthor_cs}, and \textit{ogbn-products}~\cite{arxiv-products} are co-purchase networks collected from Amazon. Nodes represent goods, and edges indicate that connected products are frequently bought together. Node features are derived from product reviews or descriptions.   
    \item \textit{Coauthor CS}~\cite{amazon_computers_photo-coauthor_cs} is a co-authorship network where nodes represent authors and edges represent scientific collaborations. Features are bag-of-words vectors of keywords from the authors' papers.   
    \item \textit{WikiCS}~\cite{wiki-cs} is a web-link graph derived from Wikipedia. Nodes correspond to computer science articles, and edges represent hyperlinks between them. Node features are averaged GloVe word embeddings of the article text. 
\item \textit{MUTAG}~\cite{MUTAG}, \textit{COX2}, and \textit{BZR}~\cite{TUDataset} are molecular graph datasets, where nodes denote atoms and edges represent chemical bonds. These datasets are commonly used for graph-level classification of molecular properties, such as mutagenicity or biological activity.
\item \textit{PROTEINS}~\cite{PROTEINS}, \textit{ENZYMES}~\cite{ENZYMES}, and \textit{DD}~\cite{DD} are protein structure datasets from bioinformatics, in which nodes correspond to secondary structure elements (SSEs) and edges encode spatial or sequential proximity between them. The task is to perform graph-level protein classification.

\end{itemize}

\begin{table}[t]
  \caption{Statistics of node classification datasets.}
    \label{tab:node_dataset}
    \begin{center}
        \begin{tabular}{lrrrr}\toprule
             Dataset &  \#Nodes&  \#Edges&  \#Features&  \#Classes\\\midrule
             Cora&2,708 & 5,429&1,433 &7 \\
             CiteSeer&3,327  &4,732  &  3,703&6  \\
             PubMed  &19,717 & 44,338 & 500 &3  \\
             Computers& 13,752 & 245,861 &  767&10  \\
             Photo& 7,650 & 119,081 & 745 &  8\\
             CS&18,333  & 81,894 &6,805  &15  \\
             WikiCS& 11,701 & 216,123 &300  & 10 \\
             ogbn-arxiv&169,343  & 1,166,243 &128  &40  \\
             ogbn-products& 2,449,029 & 61,859,140 &100  & 47 \\
             \bottomrule
        \end{tabular}
  \end{center}
  
\end{table}
\begin{table}[t]
  \caption{Statistics of graph classification datasets.}
    \label{tab:graph_dataset}
    \begin{center}
        \begin{tabular}{lrrrrr}\toprule
             Dataset  &\#Graphs&  \#Nodes&  \#Edges&  \#Features&  \#Classes\\\midrule
             COX2 &467&~41.2& ~43.4&35&2\\
             BZR &405&~35.8&~38.4&  53&2\\
             DD &1,178&~284.32& ~715.7& 89&2\\
             PROTEINS & 1,113& ~39.1& ~72.8&  3&2\\
             MUTAG &188& ~17.9& ~19.8& 7&  2\\
             ENZYMES &600&~32.6& ~62.1&3&6\\
             \bottomrule
        \end{tabular}
  \end{center}  
\end{table}
\begin{table*}[ht]
    \centering
    \caption{Settings and code links of various baseline methods.}
    \label{tab:settings-and-code-links}
    \resizebox{0.5\textwidth}{!}{
        \begin{tabular}{lc}
            \toprule
            Methods & Source Code \\
            \midrule
            $k$-Shot Sampling & \href{https://github.com/sheldonresearch/ProG/blob/main/prompt_graph/tasker/node_task.py}{ProG/blob/main/prompt\_graph/tasker/node\_task.py} \\
            Dataset Split & \href{https://github.com/sheldonresearch/ProG/blob/main/prompt_graph/data/load4data.py}{ProG/blob/main/prompt\_graph/data/load4data.py} \\
            Evaluation & \href{https://github.com/sheldonresearch/ProG/blob/main/prompt_graph/evaluation/AllInOneEva.py}{ProG/blob/main/prompt\_graph/evaluation/AllInOneEva.py}\\
            \midrule
            \texttt{DGI} & \href{https://github.com/PetarV-/DGI}{https://github.com/PetarV-/DGI}  \\
            \texttt{GRACE} & \href{https://github.com/CRIPAC-DIG/GRACE}{https://github.com/CRIPAC-DIG/GRACE}  \\
            \texttt{GraphMAE} & \href{https://github.com/THUDM/GraphMAE/tree/pyg}{https://github.com/THUDM/GraphMAE/tree/pyg} \\
            \midrule
            \texttt{GPPT} & \href{https://github.com/MingChen-Sun/GPPT}{https://github.com/MingChen-Sun/GPPT} \\
            \texttt{GraphPrompt} & \href{https://github.com/Starlien95/GraphPrompt}{https://github.com/Starlien95/GraphPrompt}  \\
            \texttt{GPF}& \href{https://github.com/zjunet/GPF}{https://github.com/zjunet/GPF} \\
            \texttt{EdgePrompt}& \href{https://github.com/xbfu/EdgePrompt}{https://github.com/xbfu/EdgePrompt} \\
             \texttt{DAGPrompT}&\href{https://github.com/Cqkkkkkk/DAGPrompT}{https://github.com/Cqkkkkkk/DAGPrompT} \\
             \midrule
             \texttt{GraphLoRA}&\href{https://github.com/AllminerLab/GraphLoRA}{https://github.com/AllminerLab/GraphLoRA} \\
             \bottomrule
        \end{tabular}
    }
\end{table*}

\subsection{Baselines}
\textbf{Graph Pre-training Methods.}
\begin{itemize}
    \item \texttt{DGI}~\cite{DGI} is a contrastive pre-training method that uses the original graph and a corrupted version as two views. The goal is to minimize the mutual information between node embeddings and the global graph representation, enabling the model to learn meaningful node representations.
    \item \texttt{GRACE}~\cite{GRACE} constructs two augmented views of the same graph through operations such as edge dropping and feature masking, and trains the model to bring representations of the same node closer across views while separating representations of different nodes using a InfoNCE loss.
    \item \texttt{GraphMAE}~\cite{GraphMAE} randomly masks node features and learns node representations by reconstructing the masked attributes with an encoder–decoder framework, which encourages the model to capture informative structural and semantic patterns.
\end{itemize}
\textbf{Graph Prompt Learning Methods.}
\begin{itemize}
    \item \texttt{GPPT}~\cite{GPPT} is the first work that introduces the "pre-training–prompting" paradigm into graph learning. It pre-trains GNNs using link prediction and designs task tokens and structure tokens for downstream adaptation. The task tokens serve as learnable class prototypes obtained via clustering, while the structure tokens aggregate neighborhood information of target nodes. Downstream node classification is reformulated as predicting whether a link exists between the two types of tokens.
    \item \texttt{GPF}~\cite{GPF} proposes a universal graph prompt learning framework that can be applied to different pre-training strategies. They inject learnable prompt vector shared across all nodes directly into input node features and feed the prompted nodes into a frozen pre-trained GNN for downstream tasks.
    \item \texttt{GraphPrompt}~\cite{GraphPrompt} proposes a general framework that connects graph pre-training with downstream tasks. It reformulates both pre-training and downstream objectives as subgraph similarity-based link prediction problems. For downstream adaptation, \texttt{GraphPrompt} introduces subgraph-level prompt vectors that guide the Readout operation toward task-relevant information, enabling efficient adaptation while keeping the pre-trained GNN  fixed.
    \item \texttt{EdgePrompt}~\cite{EdgePrompt} proposes a prompt-tuning strategy that adapts pre-trained GNNs by modifying graph structure rather than node features. It injects learnable edge-level prompts into the adjacency matrix to adjust message passing between connected nodes in the hidden layers, enabling adaptation based on graph structure.
    \item \texttt{DAGPrompT}~\cite{DAGPrompT} uses subgraph similarity-based link prediction for pre-training. For downstream adaptation, it applies LoRA to the GNN weights $W$ and the message passing process, enabling parameter-efficient fine-tuning of the pre-trained model. It performs layer-wise prediction through the similarity between subgraph embeddings and class prototypes, where a set of learnable class tokens is introduced for task adaptation.
    \item \texttt{GraphLoRA}~\cite{GraphLoRA} uses a weighted MMD loss for feature distribution alignment and performs LoRA-based fine-tuning of the GNN weights $W$ under a contrastive constraint for topological alignment, together with an additional structure-aware regularization loss for classification.
\end{itemize}
\textbf{Graph Foundation Models.}
\begin{itemize}
    \item \texttt{GCOPE}~\cite{GCOPE} introduces a cross-domain graph pre-training framework that connects multiple source graphs through learnable coordinators. These coordinators link isolated datasets into a unified graph for joint pre-training, enabling the model to learn transferable representations across domains while preserving domain-specific characteristics. The learned representations can be adapted via fine-tuning or graph prompting in a parameter-efficient manner.
    \item \texttt{MDGPT}~\cite{MDGPT} proposes a dual-prompt design for downstream adaptation, consisting of a unifying prompt and a mixing prompt. The mixing prompt aligns the target domain with cross-domain knowledge learned during pre-training, while the unifying prompt supports finer-grained domain-specific adaptation through learnable prompt vectors.
    \item \texttt{MDGFM}~\cite{MDGFM} performs multi-domain graph pre-training by jointly leveraging multiple source graphs with contrastive objectives. It further aligns source graph topologies into a shared semantic space using topology-aware refinement via domain tokens. During downstream adaptation, meta-prompts generated by mixing domain tokens capture global transferable knowledge and task-specific prompts enable domain adaptation.
\end{itemize}

\subsection{Experiment settings}
All experiments are conducted on a single NVIDIA GeForce RTX 5090 GPU with 32 GB of memory. For a fair comparison, all other baselines except \texttt{GraphLoRA} are implemented with the same 2-layer GCN as ours, while \texttt{GraphLoRA} uses its original 2-layer GAT. The pre-training epoch is 1000 while downstream training epoch is 500 with early stopping (patience=20). We tune all training hyperparameters of baselines, and keep all other hyperparameters as reported or suggested in the original papers. Our classifier is a simple linear layer. For each dataset, 90\% of the nodes are used for testing, and an N-way K-shot training set is sampled from the remaining 10\%. We report the mean and standard deviation over 5 random seeds, each with 100 independent samplings. The seed list is [12345, 23456, 34567, 45678, 56789]. For all methods, the GNN’s hidden dimensions are fixed at 128. The learning rates are tuned within [1e-5, 1e-1] and weight decay tuned within [0, 5e-3]. For \texttt{GP2F}, we set $r$ = 32, and $\lambda_1$ and $\lambda_2$ are tuned within [0.05, 5] while $\tau_\mathrm{ctr}$ and $\tau_\mathrm{fus}$ is tuned within [0.05, 0.5]. The Adam optimizer is used for optimization. Detailed hyperparameters are shown in Table~\ref{tab:1-shot-hyperparameter}, where $\tau_\mathrm{ctr}$ and $\tau_\mathrm{fus}$ denotes the hyperparameter in the contrastive loss and fusion loss, respectively.
\begin{table}  
    \centering
    \caption{Hyperparameter settings of \texttt{GP2F} in 1-shot scenario.}
    \label{tab:1-shot-hyperparameter}
    \begin{tabular}{lcccccc}
    \toprule
    Dataset & $up\_lr$ & $down\_lr$ &$up\_wd$ &$down\_wd$ & $\tau_\mathrm{ctr}$ & $\tau_\mathrm{fus}$ \\
    \midrule
    Cora      & 0.0005& 0.001& 0.0005 &0.0005 &0.5 & 0.05 \\
    CiteSeer  & 0.0001& 0.005& 0.0005&0.0005& 0.2& 0.1\\
    PubMed    & 0.00005& 0.01& 0.0005&0.0005& 0.2& 0.1\\
    Computers & 0.00005& 0.0001& 0.0005&0.0005& 0.5& 0.05\\
    Photo     & 0.00005& 0.001& 0.0005&0.0005& 0.5& 0.05\\
    CS        & 0.0001& 0.0001& 0.0005&0.0005& 0.5& 0.05\\  
    WikiCS    & 0.00005& 0.001& 0.0005&0.0005& 0.5& 0.05\\  
    \midrule
    COX2      & 0.001& 0.01&0.00001& 0.0005 &0.5 & 0.05 \\
    BZR  & 0.00005& 0.05& 0.00001&0.0005& 0.2& 0.2\\
    DD    & 0.0001& 0.05&0.0001 &0.001& 0.2& 0.2\\
    PROTEINS & 0.0005&0.01 &0.00001& 0.0005& 0.5& 0.2\\
    MUTAG     & 0.005& 0.0005&0.0001& 0.00& 0.2& 0.05\\
    ENZYMES        & 0.0001& 0.005& 0.0001&0.0005& 0.2& 0.05\\  
    \bottomrule
    \end{tabular}
\end{table}
\section{Related Work}
\subsection{Graph Pre-training}
Graph pre-training aims to learn transferable representations from unlabeled graphs using self-supervised objectives~\cite{HEATS, GOUDA, ROSEN}. Existing methods are usually divided into contrastive-based and generative-based approaches. Contrastive methods construct different views of graphs or nodes and encourage consistent representations across views. For example, DGI~\cite{DGI} contrasts node representations with a global graph summary, while GRACE~\cite{GRACE} performs node-level contrast across augmented views using an InfoNCE loss. Some work also developed to improve augmentations, such as GraphCL~\cite{GraphCL}, as well as improve stability like SimGRACE~\cite{SimGRACE}. Generative methods focus on reconstructing corrupted graphs. GraphMAE~\cite{GraphMAE} is a masked autoencoder for recovering masked node features with an encoder--decoder architecture, and GraphMAE2~\cite{GraphMAE2} further improves reconstruction by introducing multi-view re-masking strategy. In addition, BGRL~\cite{BGRL} follows a bootstrap-style design that learns node representations by predicting a target encoder, avoiding the use of explicit negative samples.
\begin{figure*}[t] 
    \centering
    \includegraphics[width=\textwidth]{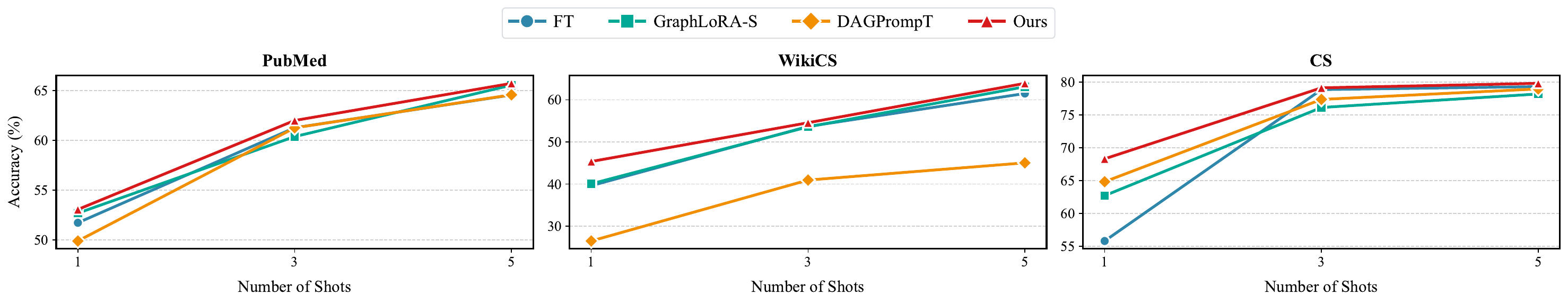}
    \caption{Accuracy of 3-shot and 5-shot cross-domain node classification experiments using \texttt{GRACE} for pre-training.}
    \label{fig:3-5shot-additional}
\end{figure*}

\subsection{Graph Prompt Learning}
Graph prompt learning aims to adapt pre-trained GNNs to downstream tasks with lightweight prompts, while keeping the pre-trained encoder largely frozen. This paradigm is motivated by the discrepancy between pre-training and downstream objectives: representations learned from self-supervised tasks may not be directly aligned with downstream prediction tasks. To reduce this objective gap, early methods reformulate downstream tasks into forms that are closer to pre-training objectives. GPPT~\cite{GPPT} introduces task and structure tokens, and casts node classification as link prediction between nodes and class tokens. Following this reformulation perspective, GraphPrompt~\cite{GraphPrompt} unifies pre-training and downstream tasks as subgraph similarity learning, while DAGPrompT~\cite{DAGPrompT} performs layer-wise prediction by measuring similarities between subgraph representations and learnable class prototypes. These methods effectively improve task alignment, but their designs are often tied to specific objective reformulations and may become less stable when the pre-training strategy changes. To improve applicability across different pre-training and downstream tasks, another line of work develops more general prompt mechanisms. Instead of redesigning the downstream objective, these methods modify the input graph or its representations through learnable prompts. GPF~\cite{GPF} adds prompt vectors to node features, EdgePrompt~\cite{EdgePrompt} injects prompts into graph edges to modulate message passing, and All-in-one~\cite{All-in-one} inserts prompt graphs into the original graph to support multiple downstream tasks with improved interpretability. MultiGPrompt~\cite{MultiGPrompt} further incorporates multi-task pre-training to enhance prompt generality. Although these methods make graph prompting more flexible, they still mainly focus on improving a single adaptation pathway based on the frozen pre-trained encoder. While these studies improve the flexibility of graph prompting, most of them adapt the pre-trained encoder through a single prompting or tuning pathway, and consequently struggles in cross-domain settings.
\subsection{Cross-Domain Graph Prompt Learning and Graph Foundation Models}
Recent studies further extend graph prompt learning to cross-domain scenarios, where the source and target graphs may differ in feature distributions, structural patterns, and label semantics. Under such domain shifts, the model needs to preserve useful knowledge from the pre-trained encoder while adapting to the target graph. GraphControl~\cite{GraphControl} addresses this problem through a control-style adaptation mechanism, which converts downstream features into the GCC input format and injects conditional feature information into frozen structural representations. Despite its effectiveness, it relies on GCC pre-training. GraphLoRA~\cite{GraphLoRA} instead adds a learnable LoRA-style GNN to frozen representations for structural adaptation, and uses SMMD to align source and target feature distributions. While this improves cross-domain transfer, it requires access to source-domain data during adaptation, which may not be available in practical downstream scenarios. Beyond downstream prompt adaptation, graph foundation models approach cross-domain generalization from the pre-training side. These methods usually aggregate knowledge from multiple source domains and then transfer the learned knowledge to downstream tasks. 
\begin{table*}[h]
  \caption{Accuracy of 1-shot node classification with different pre-training datasets. Best in \textbf{bold}.}
  \label{tab:change_pretrain_dataset}
  \centering
  \resizebox{1.0\textwidth}{!}{%
    \begin{tabular}{llllllll}
    \toprule
    \textbf{Method} & \textbf{\textrm{Cora}} & \textbf{\textrm{CiteSeer}} & \textbf{\textrm{PubMed}} & \textbf{\textrm{Computers}} & \textbf{\textrm{Photo}} & \textbf{\textrm{CS}} & \textbf{\textrm{WikiCS}} \\
    \midrule
    \multicolumn{8}{c}{Pre-trained on PubMed}\\
    \midrule
    \texttt{GRACE(LP)}      & $34.85_{\pm 9.87}$ & $28.13_{\pm 6.72}$ & $50.84_{\pm 8.96}$ & $52.13_{\pm 12.92}$ & $62.50_{\pm 12.16}$ & $58.58_{\pm 10.50}$ & $42.06_{\pm 8.51}$ \\
    \texttt{GRACE(FT)}      & $34.82_{\pm 9.56}$ & $25.95_{\pm 7.53}$ & $51.76_{\pm 8.77}$ & $52.78_{\pm 11.93}$ & $61.60_{\pm 10.94}$ & $55.14_{\pm 10.68}$ & $39.83_{\pm 8.80}$ \\
    \texttt{GPF}            & $33.44_{\pm 9.29}$ & $28.20_{\pm 8.30}$ & $51.66_{\pm 8.55}$ & $53.18_{\pm 12.20}$ & $62.39_{\pm 11.96}$ & $57.48_{\pm 10.70}$ & $41.98_{\pm 8.80}$ \\
    \texttt{EdgePrompt}     & $33.03_{\pm 8.52}$ & $28.06_{\pm 7.50}$ & $48.21_{\pm 7.86}$ & $\mathbf{53.46}_{\pm 12.17}$ & $63.08_{\pm 11.90}$ & $53.60_{\pm 10.17}$ & $41.17_{\pm 9.57}$ \\
    \texttt{GPPT}           & $25.49_{\pm 9.73}$ & $22.67_{\pm 6.26}$ & $44.81_{\pm 10.59}$ & $32.10_{\pm 14.29}$ & $42.66_{\pm 16.59}$ & $33.61_{\pm 14.36}$ & $31.61_{\pm 10.31}$ \\
    \texttt{GraphPrompt}    & $48.92_{\pm 9.57}$ & $38.75_{\pm 8.35}$ & $52.88_{\pm 9.42}$ & $42.36_{\pm 11.93}$ & $59.30_{\pm 9.56}$ & $65.04_{\pm 3.97}$ & $35.27_{\pm 7.80}$ \\
    \texttt{DAGPrompT}            & $48.47_{\pm 8.85}$ & $40.58_{\pm 8.61}$ & $50.90_{\pm 9.61}$ & $37.64_{\pm 10.95}$ & $51.96_{\pm 8.36}$ & $\mathbf{65.88}_{\pm 10.01}$ & $30.05_{\pm 7.10}$ \\
    \texttt{GraphLoRA-S}      & $50.11_{\pm 10.81}$ & $37.95_{\pm 10.92}$ & $50.32_{\pm 9.36}$ & $51.64_{\pm 11.78}$ & $64.59_{\pm 11.46}$ & $43.74_{\pm 12.76}$ & $38.91_{\pm 8.55}$ \\
    \texttt{GP2F(Ours)}           & $\mathbf{55.41}_{\pm 8.73}$ & $\mathbf{48.93}_{\pm 11.42}$ & $\mathbf{53.16}_{\pm 10.00}$ & $52.79_{\pm 10.28}$ & $\mathbf{66.96}_{\pm 10.33}$ & $64.47_{\pm 8.58}$ & $\mathbf{45.93}_{\pm 8.71}$ \\
    \midrule
    \multicolumn{8}{c}{Pre-trained on Computers} \\
    \midrule
    \texttt{GRACE(LP)}      & $34.65_{\pm 9.58}$ & $29.01_{\pm 6.42}$ & $51.05_{\pm 8.61}$ & $53.22_{\pm 12.26}$ & $60.75_{\pm 10.89}$ & $59.30_{\pm 10.41}$ & $42.09_{\pm 9.12}$ \\
    \texttt{GRACE(FT)}      & $33.48_{\pm 10.03}$ & $28.63_{\pm 7.56}$ & $51.59_{\pm 8.86}$ & $52.89_{\pm 12.28}$ & $62.15_{\pm 10.97}$ & $54.52_{\pm 9.50}$ & $39.97_{\pm 9.12}$ \\
    \texttt{GPF}            & $32.99_{\pm 10.19}$ & $28.38_{\pm 7.53}$ & $51.53_{\pm 8.62}$ & $53.17_{\pm 12.11}$ & $62.45_{\pm 12.01}$ & $57.98_{\pm 10.91}$ & $42.26_{\pm 9.07}$ \\
    \texttt{EdgePrompt}     & $30.85_{\pm 7.91}$ & $27.09_{\pm 6.76}$ & $50.67_{\pm 8.12}$ & $53.36_{\pm 12.02}$ & $62.12_{\pm 11.72}$ & $50.73_{\pm 9.68}$ & $40.83_{\pm 8.98}$ \\
    \texttt{GPPT}           & $27.37_{\pm 10.81}$ & $24.86_{\pm 7.54}$ & $46.02_{\pm 10.16}$ & $31.17_{\pm 18.39}$ & $39.99_{\pm 15.48}$ & $32.49_{\pm 14.72}$ & $31.77_{\pm 9.29}$ \\
    \texttt{GraphPrompt}    & $50.85_{\pm 10.55}$ & $38.38_{\pm 8.40}$ & $52.73_{\pm 9.66}$ & $43.78_{\pm 11.86}$ & $59.03_{\pm 9.46}$ & $\mathbf{70.58}_{\pm 7.55}$ & $34.90_{\pm 7.85}$ \\
    \texttt{DAGPrompT}            & $50.46_{\pm 9.37}$ & $46.53_{\pm 9.27}$ & $51.29_{\pm 9.65}$ & $40.62_{\pm 11.83}$ & $54.17_{\pm 8.75}$ & $68.06_{\pm 8.81}$ & $30.45_{\pm 6.90}$ \\
    \texttt{GraphLoRA-S}      & $47.58_{\pm 10.81}$ & $33.71_{\pm 10.40}$ & $47.94_{\pm 12.75}$ & $52.78_{\pm 11.74}$ & $65.63_{\pm 11.27}$ & $41.47_{\pm 12.05}$ & $40.12_{\pm 8.37}$ \\
    \texttt{GP2F(Ours)}           & $\mathbf{59.08}_{\pm 9.47}$ & $\mathbf{50.78}_{\pm 12.05}$ & $\mathbf{53.64}_{\pm 11.56}$ & $\mathbf{54.43}_{\pm 10.70}$ & $\mathbf{66.86}_{\pm 10.68}$ & $68.48_{\pm 8.17}$ & $\mathbf{44.82}_{\pm 7.61}$ \\
    \bottomrule
    \end{tabular}
  }
\end{table*}
MDGPT~\cite{MDGPT} combines shared and domain-specific prompts, MDGFM~\cite{MDGFM} aligns heterogeneous graph structures with topology-aware prompts, GCOPE~\cite{GCOPE} connects multiple source graphs via learnable coordinators, and BRIDGE~\cite{BRIDGE} learns domain-invariant representations with expert-style routing. In parallel, LLM-enhanced graph foundation models, such as ZeroG~\cite{ZeroG}, OFA~\cite{OFA}, GraphCLIP~\cite{GraphCLIP}, and GraphGPT~\cite{GraphGPT}, exploit textualization, prompting, or lightweight adaptation to improve graph understanding across tasks and domains. These graph foundation models mainly investigate knowledge transfer across domains, where source-domain knowledge is coordinated during pre-training and transferred to downstream tasks. In contrast, the fusion between general pre-trained knowledge and task-adaptive knowledge is often implemented by simple additive operations, with limited explicit constraint on how the two types of knowledge should be balanced.
\section{Additional Experiments}
\subsection{Performance on 1/3/5-shot Node Classification}
We report the accuracy of 3-shot and 5-shot scenarios on the remaining datasets, as is shown in Fig.~\ref{fig:3-5shot-additional}. Our method consistently outperforms existing approaches across these datasets. 
In addition, we observe a notable performance improvement for the compared methods in the 3-shot setting, which may be attributed to the increased number of labeled samples that alleviates overfitting under the extremely label-scarce 1-shot scenario.
\subsection{Hyperparameter Analysis}
\begin{figure*}[h] 
    \centering
    \includegraphics[width=\textwidth]{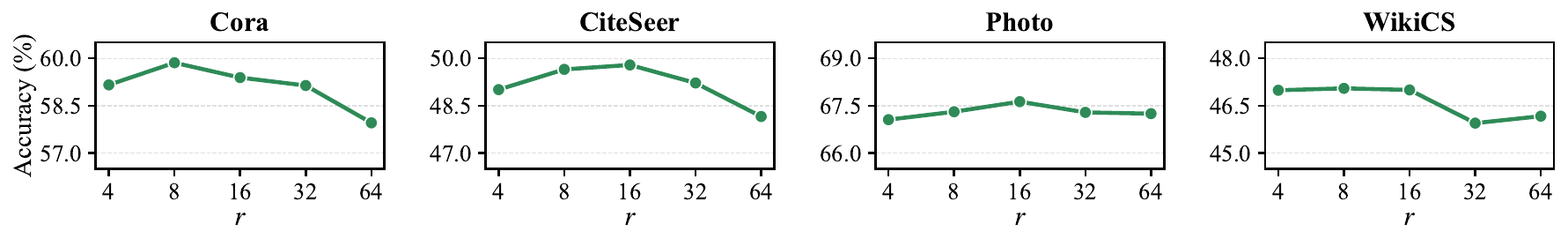}
    \caption{Hyperparameter analysis of $r$ for 1-shot node classification with \texttt{DGI} pre-trained.}
    \label{fig:r_dgi}
\end{figure*}
\begin{figure*}[h] 
    \centering
    \includegraphics[width=\textwidth]{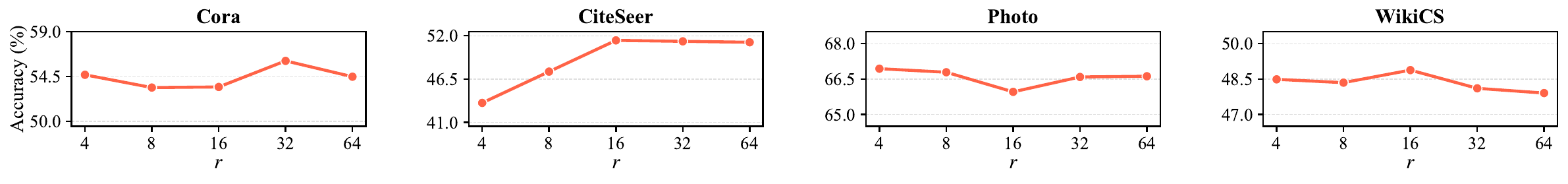}
    \caption{Hyperparameter analysis of $r$ for 1-shot node classification with \texttt{GraphMAE} pre-trained.}
    \label{fig:r_mae}
\end{figure*}
We also conduct a hyperparameter analysis of $r$ with GNNs pre-trained by \texttt{DGI} and \texttt{GraphMAE}, as shown in Fig.~\ref{fig:r_dgi} and Fig.~\ref{fig:r_mae}. Similar to the observations in Fig.~\ref{fig:hyperparameter}, the optimal value of $r$ varies across datasets. In practice, $r$ is influenced by both the dataset scale and the degree of cross-domain discrepancy, since it controls the capacity of the adaptation branch to encode task-specific corrections. We generally recommend setting $r$ to $8$ or $16$ as a starting point, and further tuning it when necessary.

\subsection{Robustness Across Pre-training Datasets}
We evaluate the robustness of GP2F across different pre-training datasets. Specifically, we use \textit{PubMed} and \textit{Computers} as source domains, respectively, and report the 1-shot node classification results on diverse downstream datasets. As shown in Table~\ref{tab:change_pretrain_dataset}, GP2F achieves the best or second-best performance on most datasets under both pre-training settings, demonstrating its effectiveness. Meanwhile, changing the pre-training dataset only leads to limited performance variation, indicating that \texttt{GP2F} is not sensitive to the choice of the pre-training dataset.
\begin{figure*}[h] 
    \centering
    \includegraphics[width=\textwidth]{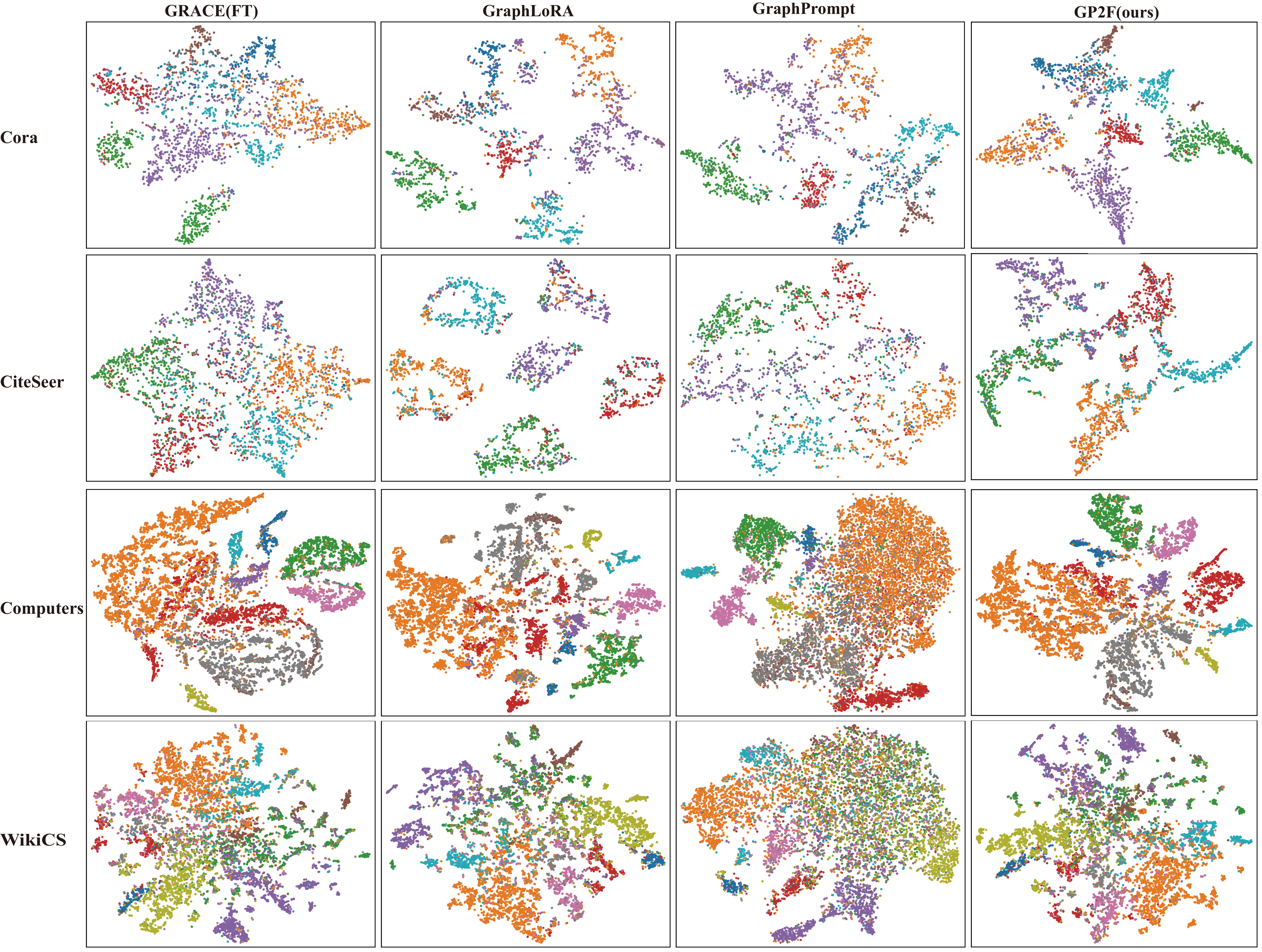}
    \caption{Visualization of \texttt{GP2F} and baselines.}
    \label{fig:tsne}
\end{figure*}
\subsection{Efficiency Analysis}
\begin{table*}[h]
    \centering
    \caption{Statistics of trainable parameters, total parameters, and the ratio of trainable parameters to full fine-tuning parameters.}
    \label{tab:param-scale}
    \resizebox{1.0\textwidth}{!}{%
        \begin{tabular}{lccccccccc}
            \toprule
            \textbf{Metric} 
            & \texttt{GRACE(LP)} 
            & \texttt{GRACE(FT)} 
            & \texttt{GPF} 
            & \texttt{EdgePrompt} 
            & \texttt{GPPT}
            & \texttt{GraphPrompt}
            & \texttt{DAGPrompT} 
            & \texttt{GraphLoRA-S}
            & \texttt{GP2F(Ours)} \\
            \midrule
            Trainable Params. (KB) 
            & 1.935 & 202.000 & 3.368 & 3.496 & 30.720 & 0.128 & 16.274 & 60.336 & 18.642 \\
            Total Params. (MB) 
            & 0.202 & 0.202 & 0.203 & 0.204 & 0.233 & 0.202 & 0.233 & 0.460 & 0.219 \\
            Trainable / FT (\%) 
            & 0.96 & 100.00 & 1.67 & 1.73 & 15.21 & 0.06 & 8.06 & 29.87 & 9.23 \\
            \bottomrule
        \end{tabular}%
    }
\end{table*}
We further report the parameter scale, training cost, and inference cost of different methods in Tables~\ref{tab:param-scale}, \ref{tab:training-time-and-memory-cost}, and \ref{tab:inference-time-and-memory-cost}. \texttt{GP2F} introduces $18.642$ KB trainable parameters, corresponding to $9.23\%$ of full fine-tuning, indicating its parameter efficiency. In terms of computational cost, \texttt{GP2F} introduces additional overhead mainly due to the auxiliary losses used during training. However, such overhead is still reasonable given the consistent performance improvements obtained across datasets.
\begin{table*}[tp]
    \centering
    \caption{Training time (ms/epoch) and GPU memory (MB) costs across various datasets.}
    \label{tab:training-time-and-memory-cost}
    \resizebox{1.0\textwidth}{!}{%
        \begin{tabular}{lc*{7}{c}}
            \toprule
            \textbf{Methods} & \textbf{Time/Memory} & \textbf{Cora} & \textbf{CiteSeer} & \textbf{PubMed} & \textbf{Computers} & \textbf{Photo} & \textbf{CS} & \textbf{WikiCS} \\
            \midrule
            \multirow{2}{*}{\texttt{GRACE(LP)}}          
                & Time & 2.9 & 3.5 & 3.6 & 6.0 & 3.6 & 14.7 & 5.0 \\
                & Memory & 94 & 186 & 334 & 711 & 370 & 942 & 594 \\
            \midrule
            \multirow{2}{*}{\texttt{GRACE(FT)}}          
                & Time & 3.2 & 3.9 & 4.4 & 7.1 & 4.9 & 15.2 & 5.1 \\
                & Memory & 96 & 188 & 335 & 712 & 372 & 944 & 596 \\
            \midrule
            \multirow{2}{*}{\texttt{GPF}}          
                & Time & 3.0 & 3.5 & 3.6 & 6.1 & 4.0 & 14.5 & 5.2 \\
                & Memory & 109 & 204 & 442 & 787 & 412 & 1043 & 659 \\
            \midrule
            \multirow{2}{*}{\texttt{EdgePrompt}}          
                & Time & 2.5 & 2.8 & 6.8 & 23.1 & 12.3 & 20.5 & 20.6 \\
                & Memory & 223 & 288 & 1423 & 5949 & 2914 & 2804 & 5190 \\
            \midrule
            \multirow{2}{*}{\texttt{GPPT}}          
                & Time & 6.0 & 5.7 & 4.9 & 10.4 & 7.3 & 21.9 & 9.7 \\
                & Memory & 94 & 185 & 334 & 711 & 370 & 941 & 594 \\
            \midrule
            \multirow{2}{*}{\texttt{GraphPrompt}}          
                & Time & 5.0 & 4.9 & 4.9 & 6.4 & 5.7 & 8.9 & 5.6 \\
                & Memory & 644 & 925 & 2338 & 11003 & 5754 & 13241 & 4161 \\
            \midrule
            \multirow{2}{*}{\texttt{DAGPrompT}}          
                & Time & 10.7 & 8.5 & 8.6 & 12.5 & 17.7 & 16.3 & 12.0 \\
                & Memory & 399 & 925 & 1014 & 1284 & 715 & 13749 & 461 \\
            \midrule
            \multirow{2}{*}{\texttt{GraphLoRA-S}}          
                & Time & 20.1 & 34.3 & 451.7 & 232.3 & 86.2 & 403.7 & 172.5 \\
                & Memory & 469 & 718 & 18177 & 10363 & 3615 & 16604 & 7815 \\
            \midrule
            \multirow{2}{*}{\texttt{GP2F(Ours)}}          
                & Time & 7.6 & 8.8 & 165.5 & 88.6 & 31.4 & 154.4 & 67.5 \\
                & Memory & 533 & 830 & 22739 & 11241 & 3596 & 20237 & 8161 \\
            \bottomrule
        \end{tabular}%
    }
\end{table*}

\begin{table*}[tp]
    \centering
    \caption{Inference time (ms/epoch) and GPU memory (MB) costs across various datasets.}
    \label{tab:inference-time-and-memory-cost}
    \resizebox{1.0\textwidth}{!}{%
        \begin{tabular}{lc*{7}{c}}
            \toprule
            \textbf{Methods} & \textbf{Time/Memory} & \textbf{Cora} & \textbf{CiteSeer} & \textbf{PubMed} & \textbf{Computers} & \textbf{Photo} & \textbf{CS} & \textbf{WikiCS} \\
            \midrule
            \multirow{2}{*}{\texttt{GRACE(LP)}}          
                & Time & 10.8 & 14.9 & 14.0 & 16.4 & 19.4 & 25.7 & 20.6 \\
                & Memory & 91 & 167 & 337 & 721 & 374 & 946 & 603 \\
            \midrule
            \multirow{2}{*}{\texttt{GRACE(FT)}}          
                & Time & 11.2 & 12.5 & 15.2 & 17.0 & 15.8 & 28.2 & 14.6 \\
                & Memory & 109 & 186 & 456 & 805 & 422 & 1057 & 675 \\
            \midrule
            \multirow{2}{*}{\texttt{GPF}}          
                & Time & 16.2 & 14.2 & 16.6 & 15.7 & 13.4 & 34.2 & 16.7 \\
                & Memory & 106 & 185 & 445 & 796 & 417 & 1046 & 667 \\
            \midrule
            \multirow{2}{*}{\texttt{EdgePrompt}}          
                & Time & 18.0 & 17.5 & 18.5 & 40.9 & 27.1 & 40.7 & 44.8 \\
                & Memory & 225 & 290 & 1435 & 5960 & 2920 & 2814 & 5199 \\
            \midrule
            \multirow{2}{*}{\texttt{GPPT}}          
                & Time & 12.7 & 18.5 & 22.8 & 22.3 & 20.5 & 34.7 & 23.7 \\
                & Memory & 91 & 167 & 335 & 721 & 375 & 945 & 604 \\
            \midrule
            \multirow{2}{*}{\texttt{GraphPrompt}}          
                & Time & 2543.8 & 3514.5 & 55177.1 & 33950.1 & 13370.4 & 60385.1 & 23829.3 \\
                & Memory & 564 & 880 & 1792 & 3961 & 2185 & 13221 & 1416 \\
            \midrule
            \multirow{2}{*}{\texttt{DAGPrompT}}          
                & Time & 244.6 & 314.2 & 1718.7 & 1461.7 & 597.8 & 1466.5 & 909.5 \\
                & Memory & 440 & 989 & 1044 & 1044 & 753 & 13827 & 493 \\
            \midrule
            \multirow{2}{*}{\texttt{GraphLoRA-S}}          
                & Time & 24.7 & 36.3 & 18.6 & 21.6 & 19.6 & 34.8 & 20.7 \\
                & Memory & 407 & 631 & 15139 & 7978 & 2695 & 13743 & 5883 \\
            \midrule
            \multirow{2}{*}{\texttt{GP2F(Ours)}}          
                & Time & 18.1 & 19.2 & 11.2 & 27.8 & 23.2 & 29.2 & 18.5 \\
                & Memory & 483 & 747 & 18857 & 9793 & 3242 & 16930 & 7193 \\
            \bottomrule
        \end{tabular}%
    }
\end{table*}
\subsection{Visualization}
We further provide 5-shot t-SNE visualizations to qualitatively compare the representation distributions learned by \texttt{GP2F}, \texttt{GRACE(FT)}, \texttt{GraphPrompt}, and \texttt{GraphLoRA-S}. As shown in Fig.~\ref{fig:tsne}, \texttt{GP2F} generally produces more compact intra-class clusters and clearer inter-class boundaries across datasets. In contrast, the baseline methods exhibit more mixed class distributions in several cases. These observations suggest that the adaptive fusion of frozen pre-trained knowledge and task-adaptive knowledge helps \texttt{GP2F} learn more discriminative target-domain representations.
\subsection{GraphLoRA-S using GCN for fair comparison}
\begin{table*}[h]
    \centering
    \caption{Performance comparison with \texttt{GraphLoRA-S} using different backbones. Best in \textbf{bold}.}
    \label{tab:graphlora_gcn}
    \resizebox{1.0\textwidth}{!}{%
        \begin{tabular}{llllllll}
            \toprule
            \textbf{Method} & \textbf{\textrm{Cora}} & \textbf{\textrm{CiteSeer}} & \textbf{\textrm{PubMed}} & \textbf{\textrm{Computers}} & \textbf{\textrm{Photo}} & \textbf{\textrm{CS}} & \textbf{\textrm{WikiCS}} \\
            \midrule
            \texttt{GraphLoRA-S(GCN)} & $49.63_{\pm 10.92}$ & $39.95_{\pm 11.53}$ & $49.38_{\pm 13.08}$ & $52.27_{\pm 11.94}$ & $65.15_{\pm 11.35}$ & $42.47_{\pm 14.02}$ & $40.69_{\pm 8.58}$ \\
            \texttt{GraphLoRA-S(GAT)} & $56.25_{\pm 9.13}$ & $48.10_{\pm 10.22}$ & $52.68_{\pm 12.45}$ & $48.41_{\pm 10.03}$ & $66.34_{\pm 10.96}$ & $62.72_{\pm 8.22}$ & $40.07_{\pm 8.97}$ \\
            \texttt{Ours(GCN)}      & $\mathbf{57.80}_{\pm 8.67}$ & $\mathbf{51.55}_{\pm 11.73}$ & $\mathbf{53.05}_{\pm 10.64}$ & $\mathbf{54.89}_{\pm 10.39}$ & $\mathbf{67.60}_{\pm 10.93}$ & $\mathbf{68.33}_{\pm 7.08}$ & $\mathbf{45.37}_{\pm 8.72}$ \\
            \bottomrule
        \end{tabular}%
    }
\end{table*}
Since the original \texttt{GraphLoRA-S} adopts GAT as the backbone, we further evaluate its performance with GCN to ensure a fair comparison with our method. As shown in Table~\ref{tab:graphlora_gcn}, replacing GAT with GCN leads to a clear performance drop for \texttt{GraphLoRA-S} on most datasets. This suggests that the reported performance of \texttt{GraphLoRA-S} is partly affected by the choice of backbone architecture, and \texttt{GP2F} still outperforms \texttt{GraphLoRA-S}.

\section{Adaptive Fusion Behavior under Domain Shift}
We further discuss how GP2F adapts to domain shift through the learned fusion coefficient $\alpha$. In cross-domain graph prompt learning, source-domain graphs are unavailable during downstream adaptation, making it difficult to explicitly measure the source-target discrepancy or reduce the domain gap through distribution-matching objectives such as MMD. GP2F instead addresses domain shift by balancing the frozen branch, which preserves pre-trained knowledge, and the adapted branch, which captures target-specific information. The coefficient $\alpha$ reflects the relative contribution of the two branches: a larger $\alpha$ indicates greater reliance on the frozen branch, whereas a smaller $\alpha$ assigns more weight to the adapted branch. When the target domain is close to the pre-training domain, $\alpha$ tends to decrease during training, suggesting that the adapted branch gradually becomes reliable. For larger domain shifts, $\alpha$ may slightly increase at the early stage, indicating an initial reliance on the stable frozen branch, and then decreases as the adapted branch learns target-specific knowledge.

\section {Limitations of Theoretical Assumptions}
Our theoretical analysis adopts several idealized assumptions for analytical tractability. Specifically, we assume that the latent target representation is linearly separable, and that the frozen and adapted representations are two noisy estimators of the ideal representation $z_i$ with unbiased errors. These assumptions allow us to derive a concise explanation of how error variance and cross-covariance affect the benefit of fusion.

These assumptions do not affect the main conclusion of this paper. Instead, they provide a simplified setting to explain why combining pre-trained knowledge and task-adaptive knowledge can be beneficial. The strong performance of linear probing and fine-tuning in our motivating experiments supports this view: the frozen branch contains useful transferable knowledge, while the adapted branch learns target-specific representations. Therefore, both branches can serve as meaningful estimators of the ideal target representation.

Nevertheless, these assumptions may not hold in extreme cases. If the frozen branch contains little transferable knowledge under severe domain shift, or if the adapted branch overfits the limited few-shot labels, either branch may become a poor estimator of $z_i$. In such cases, the assumptions of unbiased estimation and effective complementarity may be weakened, and the benefit of fusion may diminish. Under the practical settings considered in this paper, however, our empirical results show that the two branches are generally effective and complementary.

\end{document}